\documentclass{article}
\usepackage{amsthm}
\newtheorem{theorem}{Theorem}
\usepackage[preprint]{neurips_2026}

\usepackage[utf8]{inputenc} 
\usepackage[T1]{fontenc}    
\usepackage{hyperref}       
\usepackage{url}            
\usepackage{booktabs}       
\usepackage{amsfonts}       
\usepackage{nicefrac}       
\usepackage{microtype}      
\usepackage{xcolor}         

\usepackage{tikz}
\usetikzlibrary{arrows.meta,positioning}
\usepackage{tcolorbox}
\usepackage{enumitem}
\usepackage{microtype}
\usepackage{graphicx}
\usepackage{subcaption}
\usepackage{booktabs} 
\usepackage{multirow}
\usepackage{amsmath}
\usepackage{amssymb}
\usepackage{mathtools}
\usepackage{amsthm}
\usepackage[capitalize,noabbrev]{cleveref}
\usepackage{float}
\usepackage{wrapfig}
\usepackage{amsthm}
\usepackage{subcaption}
\tcbuselibrary{breakable}
\usepackage{colortbl}
\definecolor{deltarow}{gray}{0.93}
\definecolor{wingreen}{RGB}{0,110,60}
\definecolor{lossred}{RGB}{180,30,30}

\newcommand{\dpos}[1]{\textcolor{wingreen}{\textit{#1}}}
\newcommand{\dneg}[1]{\textcolor{lossred}{\textit{#1}}}
\newcommand{\dneu}[1]{\textcolor{gray}{\textit{#1}}}
\newcommand{\deltarow}{\rowcolor{deltarow}}

\newcommand{\hq}[1]{#1}

\title{DIG to Heal: Scaling General-purpose Agent Collaboration via Explainable Dynamic Decision Paths}

\author{%
  Hanqing Yang\thanks{
    Contact: \texttt{hanqing3@andrew.cmu.edu}.
    CMU: Carnegie Mellon University;
    UArizona: University of Arizona.
  } \\
  CMU
  \And
  Hyungwoo Lee \\
  CMU
  \And
  Yuhang Yao \\
  Zoom
  \AND
  Zhiwei Liu \\
  Salesforce
  \And
  Kay Liu \\
  Amazon
  \And
  Jingdi Chen \\
  UArizona
  \AND
  Carlee Joe-Wong\\
  CMU
}
\begin{document}

\maketitle

\begin{abstract}
The increasingly popular agentic AI paradigm promises to harness the power of multiple, general-purpose large language model (LLM) agents to collaboratively complete complex tasks. While many agentic AI systems reduce complexity through predefined workflows or fixed agent roles, the ideal is to support truly autonomous agents capable of emergent collaboration across many interacting agents. Yet in practice, such unstructured interactions often lead to redundant work and cascading failures that are difficult to interpret or correct.
In this work, we study multi-agent systems composed of general-purpose LLM agents that solve problems through emergent collaboration, without relying on predefined roles, control flows, or communication constraints.
We introduce the Dynamic Interaction Graph (DIG), which captures emergent collaboration as a time-evolving causal network of agent activations and interactions. DIG makes emergent collaboration observable and explainable for the first time, enabling real-time identification, explanation, and correction of collaboration-induced error patterns directly from agents' collaboration paths. Thus, DIG fills a critical gap in understanding how general LLM agents solve problems together in truly agentic multi-agent systems. The project webpage can be found at: \url{https://happyeureka.github.io/dig}.
\end{abstract}

\section{Introduction}
\label{sec:intro}

Large Language Models (LLMs) are increasingly used as general decision-making units due to their broad world knowledge, flexible reasoning abilities, and capacity to interact with external tools and memory. This has led to the emergence of \emph{LLM agents}, which extend LLMs with persistent state, tool access, and environment interaction, enabling them to act autonomously in complex tasks. Recently, significant attention has been devoted to \emph{LLM-powered multi-agent systems}~\citep{chen2025five,yang2025llmpowereddecentralizedgenerativeagents}, where multiple agents collaborate on tasks such as research, software engineering, robotics, simulation, and question answering \citep{durante2024agent,he2025llm}.


Despite the potential of allowing LLM agents to act autonomously, in practice unstructured agent collaboration often leads to task failure. Thus, many multi-agent systems utilize \textit{predefined workflows}, with fixed roles, task decompositions, tool usage patterns, and message routing policies, in order to stabilize performance \citep{li2024survey}. However, such designs are often domain-specific and fundamentally constrain how agents interact.
In contrast, as illustrated in Fig.~\ref{fig:coop-prob-solving}, we study LLM-powered multi-agent systems composed of \textbf{general-purpose} LLM agents that operate without predefined roles, control flow, or communication constraints. In such systems, coordination, task decomposition, and problem-solving strategies must arise implicitly from local agent decisions and interactions, resembling how human teams dynamically coordinate and adapt to solve evolving tasks. 
Our goal is to \textit{evaluate} such cooperative intelligence in general-purpose LLM agents by \textit{illuminating} the causes of collaboration failures so as to enable more resilient autonomous multi-agent systems.

\begin{figure}[t]
    \centering
    \begin{minipage}[t]{0.48\textwidth}
        \centering
        \includegraphics[width=1\linewidth]{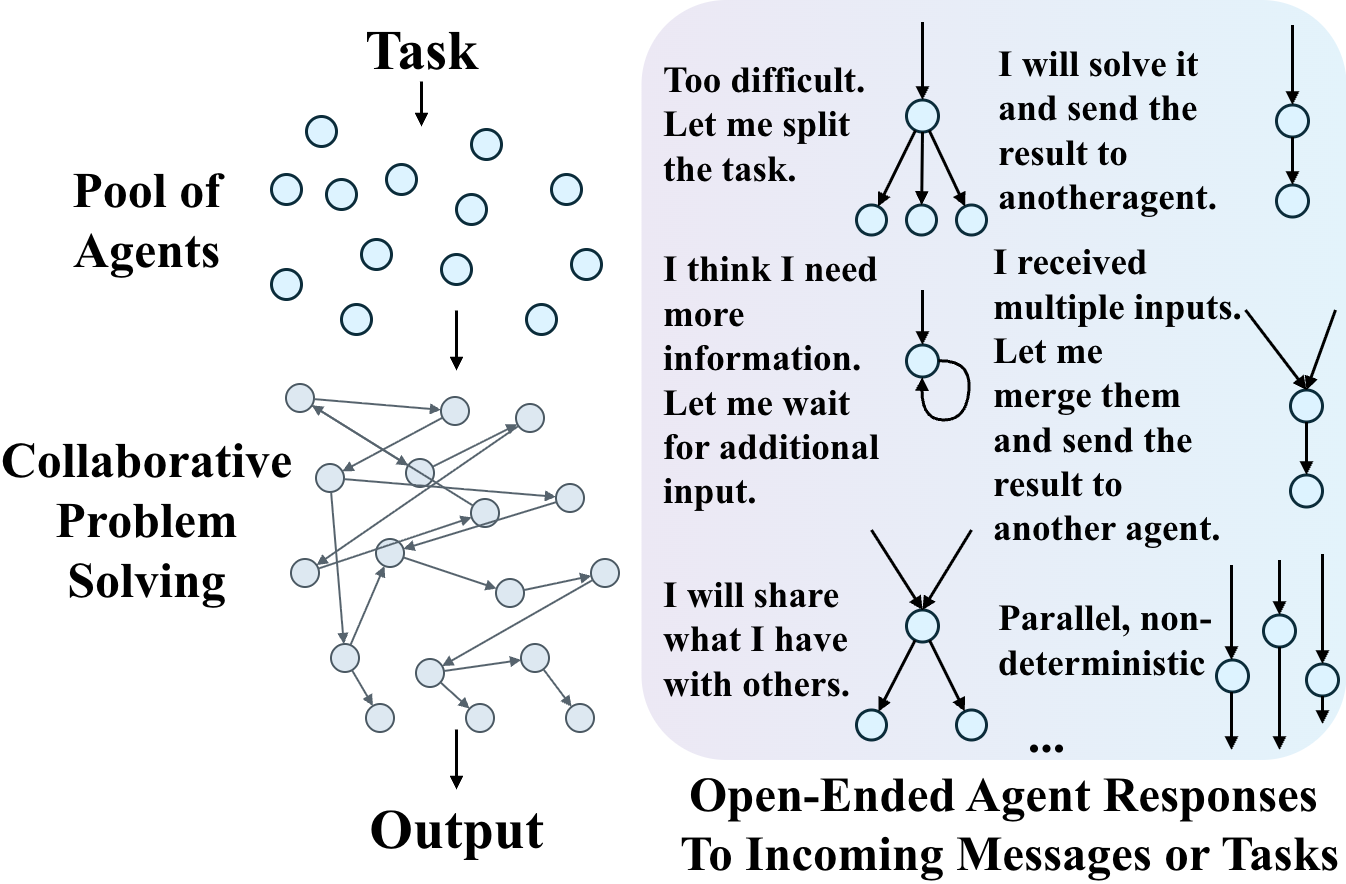}
        \caption{\hq{Cooperative problem solving with general-purpose agents. A pool of autonomous agents works on a shared task without predefined roles, control flow, or communication constraints, each operating independently and non-deterministically and interacting through emergent cooperative strategies. We make emergent cooperation analyzable by modeling agent interaction in a protocol-agnostic manner.}}
        \label{fig:coop-prob-solving}
    \end{minipage}
    \hfill
    \begin{minipage}[t]{0.51\textwidth}
        \centering
        \includegraphics[width=1\linewidth]{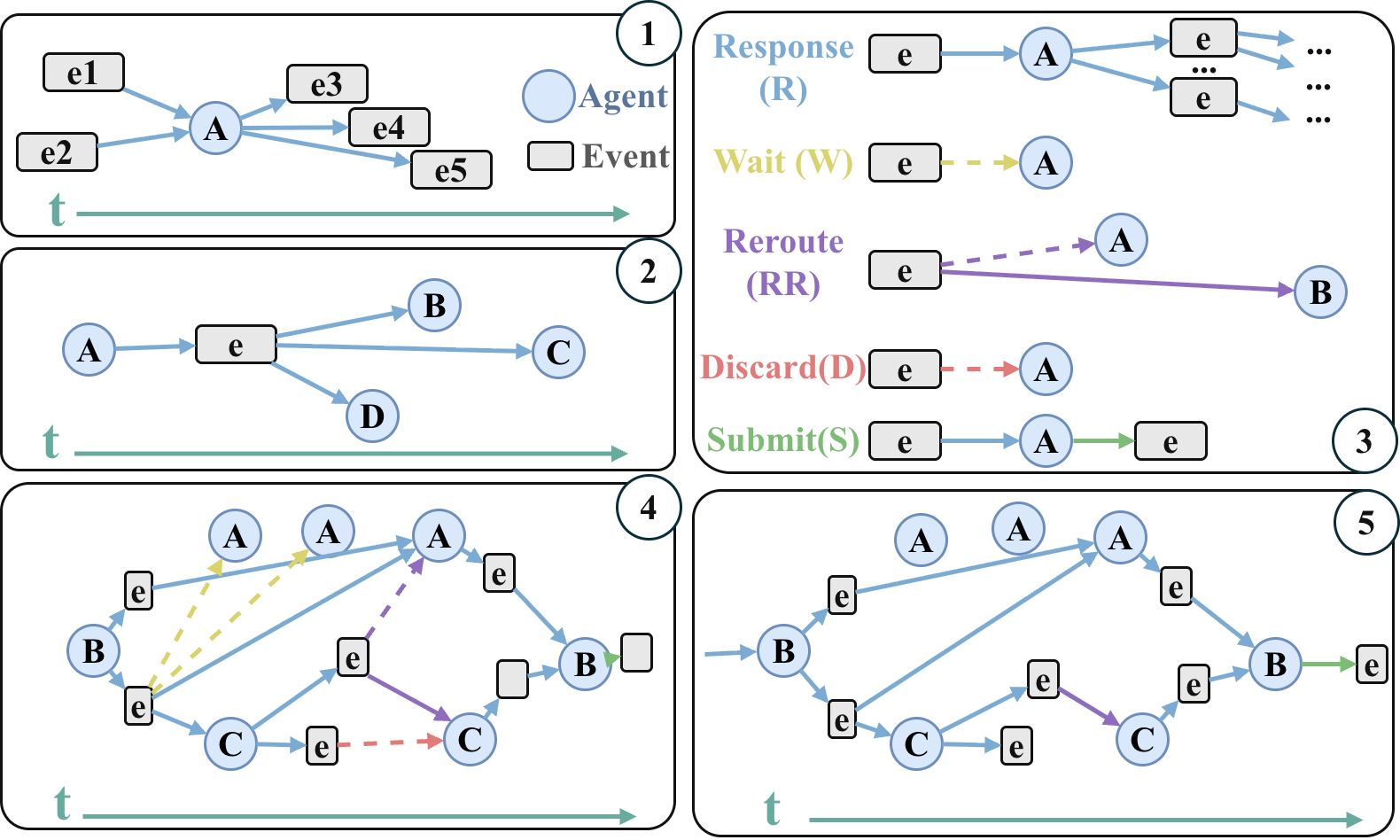}
        \caption{
        \hq{Dynamic Interaction Graph (DIG) and local graph rewrite operators. (1-2) Bipartite graph structure with agent activation nodes (circles) and event nodes (rectangles). 
        (3) Canonical edge rewrite operators.
        (4) Raw DIG induced by an execution trace, including all generation and delivery edges. (5) Cleaned DIG after removing non-productive edges corresponding to non-productive interactions, exposing the underlying interaction structure.}}
        \label{fig:overview}
    \end{minipage}
    \vspace{-0.15in}
\end{figure}

\textbf{The Challenges of Emergent Cooperation.}
Allowing agents to autonomously interact introduces substantial challenges. Each agent receives information, from other agents and the environment, at arbitrary times while acting concurrently with others. Local agent decisions can thus easily push global execution away from intended behavior, producing redundant work, stalled progress, or cascading failures that are difficult to diagnose or recover from. Simply increasing the number of agents does not guarantee improved performance~\citep{kim2025towards}; instead, interaction complexity grows combinatorially, leading to even more interdependent outputs that may cause task failures or redundancies.
Unlike traditional distributed systems, multi-LLM agent systems are stochastic, heterogeneous, and message-driven, making tracing, debugging, and recovery particularly challenging. Existing approaches typically rely on centralized monitors~\citep{gandhi2025whenagentsgoastray,zhang2025agentracer}, guard agents~\citep{yu2025aworldorchestratingtrainingrecipe}, or post hoc failure attribution based on execution logs or curated error datasets~\citep{kong2025aegis,zhang2025agent}. These methods either assume fixed workflows, operate after failures have already occurred, or require large amounts of annotated data, and therefore do not directly address how cooperation emerges or fails during execution.

\textbf{Our Perspective: Analyzing Agent Interactions.}
In this work, we argue that understanding and improving general multi-agent systems requires treating \emph{collaboration itself} as the primary object of analysis. Rather than focusing on individual agents or final outcomes, we model the system in terms of its interaction structure. We introduce the \emph{Dynamic Interaction Graph (DIG)} (Fig.~\ref{fig:overview}), a time-evolving directed graph that formalizes agent activations and interaction events as a causal network. This abstraction makes emergent collaboration observable and explainable, enabling real-time tracing, structural analysis, and reasoning about how cooperative behavior forms, propagates, and fails.

Building on DIG, we establish a \emph{topological characterization} of emergent collaboration, which maps interaction graph structures to distinct classes of collaboration-induced error patterns. This result provides a model-agnostic categorization of failure modes, enabling early failure detection at the level of collaboration structure rather than individual agent outputs. This structural perspective is designed to be compatible with existing failure detection and attribution methods, e.g., as proposed by \cite{cemri2025multi}. Rather than replacing them, it provides an \emph{error detection and guarding layer}, where topology-level signals can be integrated into semantic, behavioral, and learning-based detectors.
The DIG framework further supports reasoning about effective agent utilization, revealing when agents contribute meaningful progress versus introducing redundancy, inefficiency, or instability. Together, these capabilities provide a unified and explainable foundation for understanding and debugging truly agentic multi-LLM systems, where cooperation emerges dynamically from agents. 

Our main \textbf{contributions} are:
(1) \textbf{Interaction Modeling.} We formalize the interaction structure and cooperative problem-solving process of general LLM agents with DIG, a dynamic directed acyclic graph (DAG) representation that captures both sequential and parallel dependencies, as well as temporal ordering, in unconstrained multi-agent systems. 
(2) \textbf{Topological Failure Characterization.} We establish a mapping from the DIG topology to collaboration-induced error types, yielding a high-level, model-agnostic categorization of failure modes that supports \emph{explainable and hierarchical} error detection and can be naturally integrated with existing error detection and attribution methods.
(3) \textbf{General-Agent Execution Traces.} We evaluate DIG on two scalable and verifiable evaluation tasks, allowing us to introduce the first dataset of execution traces that capture the behavior and interaction patterns of general-purpose agents operating without predefined roles or control flow. 
(4) \textbf{Structure-Driven System Healing.} We propose an automatic system healing mechanism driven by observed agent behavior and interaction structure, enabling early detection and correction of collaboration-induced failures during execution. In experiments, we find that this mechanism yields improved efficiency and fewer errors, particularly on difficult tasks with many agents.

After reviewing related work in Section~\ref{sec:related}, we formalize agent interactions (Section~\ref{sec:problem_formulation}) and DIG's construction (Section~\ref{sec:method_dig}). We then explain how DIG enables failure monitoring and healing (Section~\ref{sec:failures}) and describe the design (Section~\ref{sec:experiment_design}) and results (Section~\ref{sec:results}) of our evaluation, concluding in Section~\ref{sec:conclusion}.

\section{Related Work}
\label{sec:related}

\textbf{LLM-powered Multi-agent Systems.}
Recent frameworks such as LangGraph~\citep{langgraph2024}, AutoGen~\citep{wu2023autogen}, and AgentVerse~\citep{chen2023agentverse} enable multi-agent collaboration through structured workflows and orchestration policies, with applications including research assistants, software engineering, robotics, and simulation \citep{chen2025five}. While effective for specific domains, these systems rely on predefined roles, control flow, and communication structures~\citep{yan2025beyond}, providing limited insight into emergent cooperation in unconstrained, general-agent systems.

\textbf{Agent Interaction Modeling.}
Our interaction model is related to classical abstractions such as stochastic computation graphs (SCGs) \citep{schulman2016gradientestimationusingstochastic}, Petri nets \citep{peterson1977petri}, the Actor model \citep{agha1986actors}, description logics \citep{baader2008description,antoniou2009web}, and process algebras \citep{fokkink2013introduction,de2014gentle}. These formalisms model computation, concurrency, or system properties, but are not designed to preserve the asynchronous, non-deterministic, natural-language interaction paths generated by LLM agents without fixed protocols in real time, so as to enable failure interventions. In contrast, DIG provides lightweight interaction model tailored to emergent cooperation in multi-LLM systems. Please see Appendix~\ref{app:formalism_comparison} for further comparisons.

\textbf{Failure Characterization in Multi-agent Systems.}
Recent taxonomies show that multi-agent LLM systems frequently fail due to specification errors, misalignment, and verification breakdowns, with local failures often cascading into system-wide issues \citep{cemri2025multi}. Prior work addresses these failures through post hoc attribution \citep{kong2025aegis,zhang2025agent}, centralized monitors or guard agents \citep{yu2025aworldorchestratingtrainingrecipe,gandhi2025whenagentsgoastray,zhang2025agentracer}, and runtime supervision mechanisms \citep{wang2025megaagent,lin2025stop}, as well as graph-based monitoring and defense \citep{wu2025monitoring}. 
While effective in specific settings, these approaches are typically coupled to fixed agent architectures or orchestration patterns, rely on curated error datasets, and diagnose failures at the level of individual agents or final outcomes. By contrast, our work provides a lightweight, framework-agnostic trace that captures interactions among general agents, enabling systematic and explainable error attribution and scaling naturally to massively parallel multi-agent systems.


\section{Problem Formulation: Cooperative Problem Solving with General Agents}
\label{sec:problem_formulation}

We consider cooperative problem solving in systems of multiple \emph{general-purpose agents} that interact asynchronously via message passing. 
As illustrated in Fig.~\ref{fig:coop-prob-solving}, agent collaboration and task decomposition thus emerge solely from local agent decisions.

\textbf{System Model.}
A cooperative multi-agent system is characterized by the tuple
\(
\mathcal{S} = (\mathcal{A}, \mathcal{E}, \mathcal{P}, \mathbb{Z}),
\)
where $\mathcal{A}=\{a_1,\dots,a_N\}$ is a finite set of agents, $\mathcal{E}$ is the set of interaction events, $\mathcal{P}$ denotes problem instance to be solved, $\mathbb{Z}$ denotes the discrete logical time domain.
$\mathcal{P}$ is encoded as an initial event $e_0\in\mathcal{E}$, delivered at $t=0$ to an initial subset of agents $\mathcal{A}_0\subseteq\mathcal{A}$. 
Execution terminates when a terminal event $e_\infty$ is generated.
Logical time $t \in \mathbb{Z}$ advances whenever an interaction occurs, including agent activation or event delivery.
It induces a causal ordering over interactions but may not correspond to wall-clock time.
Each \textbf{agent} $a\in\mathcal{A}$ (omitting the index $i$ in $a_i$ for notational simplicity) is modeled as an autonomous decision-making unit that operates asynchronously and has no predefined role or control flow. Thus, we abstract away agent internals and model only observable interaction behavior via a non-deterministic transformation on events:
\begin{equation}
f_a : I_a \longrightarrow O_a,
\qquad
I_a,\, O_a \subseteq \mathcal{E}.
\label{eq:agent_map}
\end{equation}


Each agent $a$ maintains a local event \textbf{buffer} $B_a(t)\subseteq\mathcal{E}$. Let $\mathrm{activate}(t,a)\in\{0,1\}$ indicate whether agent $a$ is activated at time $t$. An idle agent is activated when its buffer changes:
$$
\mathrm{activate}(t,a)=1
\quad \text{if} \quad
B_a(t)\neq B_a(t-1)
\ \text{and}\
\mathrm{activate}(t-1,a)=0.
$$
A buffer can change in two cases: the agent receives a new event, or the agent processes an existing event and removes it from the buffer. For example, an event may specify a computational task, i.e., sorting a list, which the agent performs after activation.
Agents interact through \textbf{events} $e$, defined as
\begin{equation}
e=(p_e,\pi_e)\in\mathcal{E},
\qquad
\pi_e(t)\mapsto (R_t(e),\pi_e'),
\quad
R_t(e)\subseteq\mathcal{A},
\label{eq:event_def}
\end{equation}
where $p_e$ is the event payload and $\pi_e$ is its delivery policy. At logical time $t$, $\pi_e$ determines the recipient set $R_t(e)$ and updates itself to $\pi_e'$. The event is then appended to the buffers of agents in $R_t(e)$. If $\pi_e'$ is empty, the event expires. For example, $p_e$ may be the result of a computation completed by agent $a$, and $\pi_e$ may immediately deliver the result $p_e$ to all agents $a'\neq a$.

\textbf{Tracking Agent Dynamics.}
The coupled dynamics of agent transformations~\eqref{eq:agent_map} and policy-driven event delivery~\eqref{eq:event_def} jointly define a stochastic execution process over logical time. We record this process as an execution trace $\mathcal{T}$ and define the system-level diagnosis problem $\Phi$ as:
\begin{equation}
    \mathcal{T}=\{(a_t,e_t,t)\}_{t\in\mathbb Z},
    \qquad
    \Phi:\mathcal{T}\to\mathcal{F}.
    \label{eq:trace_failure_def}
\end{equation}
Here, $a_t$ is the agent activated at time $t$, if any, and $e_t$ denotes the event generated or delivered at that time. The challenge is that asynchronous and non-deterministic execution makes failures hard to localize from the terminal output $e_\infty$ alone. Failures may arise from intermediate interaction structure, e.g., stalled activations, lost subproblems, duplicated work, or premature termination. Thus, $\Phi$ must infer $\mathcal{F}$, the space of failure signals and explanations, from observable interaction traces without access to agent internals or semantic task labels. We address this with the \emph{Dynamic Interaction Graph (DIG)}, a formal representation of agent interactions during cooperative problem solving.

\section{Method: Dynamic Interaction Graph (DIG)}
\label{sec:method_dig}

We present DIG as a system-level representation and evaluation framework for cooperative problem solving in general multi-agent systems. Given an execution trace $\mathcal{T}$, DIG transforms observed agent interactions into a labeled causal interaction graph $\Psi(\mathcal{T})$. This graph enables graph-level inference to detect failures, explain coordination dynamics, and derive interventions without requiring access to agent internals or semantic task labels.

\subsection{DIG Definition}\label{sec:dig_definition}

DIG represents agent cooperation as a dynamic causal graph. At logical time $t$, DIG is defined as:
\begin{equation}
G(t) = (\mathcal{V}_A(t) \cup \mathcal{V}_E(t),\, \mathcal{E}_G(t)),
\label{eq:dig_graph}
\end{equation}
where $\mathcal{V}_A(t)$ is the set of \textbf{activation nodes}, $\mathcal{V}_E(t)$ is the set of \textbf{event nodes}, and $\mathcal{E}_G(t)$ is the set of directed edges, \hq{as illustrated in Fig.~\ref{fig:overview} (1--2)}. The graph is bipartite: 
edges only connect activation nodes and event nodes.
Each \textbf{activation node} represents one activation of agent $a_v \in \mathcal{A}$ over the interval $\tau_v=[t_v^{\mathrm{start}},t_v^{\mathrm{end}}]$:
$
v=(\mathrm{id}_v,a_v,\tau_v).
$
Each \textbf{event node} represents one interaction event together with its realized delivery times:
$
e_G=(e,\tau_e), \quad
\tau_e=\bigl(t_e^{\mathrm{gen}},\{t_e^{\mathrm{recv}}(b)\}_{b\in R_\infty(e)}\bigr).
$


\textbf{Edges.}
DIG has two edge types:
(1) \textit{Generation edges} $(v \rightarrow e_G) \in \mathcal{E}_G$ when activation $v$ generates event $e$, and
(2) \textit{Delivery edges} $(e_G \rightarrow v_b) \in \mathcal{E}_G$ when event $e$ triggers activation $v_b$.
Collectively, $\mathcal{E}_G$ defines a complete \emph{temporal causal trace} of execution, on which all causal influence propagates.



\textbf{Illustration.}
Fig.~\ref{fig:overview}(4--5) illustrate DIG. Circles denote activation nodes $v \in \mathcal{V}_A$, spanning $[t_v^{\mathrm{start}}, t_v^{\mathrm{end}}]$, and squares denote event nodes $e_G \in \mathcal{V}_E$ at generation time $t_e^{\mathrm{gen}}$. Edges from circles to squares represent generation $(v \rightarrow e_G)$, while edges from squares to circles represent delivery $(e_G \rightarrow v_b)$. Fig.~\ref{fig:overview}(4) shows the full real-time DIG, including transient edges (defined in Sec.~\ref{sec:interaction_primitives}), while Fig.~\ref{fig:overview}(5) omits them. Together, they show how an execution trace $\mathcal{T}$ maps to a time-indexed bipartite causal graph $G(t)$, where cooperation structure is encoded by nodes and directed edges. DIG makes cooperation dynamics and causal dependencies observable.

\subsection{Interaction Primitives as Graph Rewrite Operators}
\label{sec:interaction_primitives}

The DIG evolves through \emph{local graph rewrite operators} induced by agent activations. When an agent $a$ is activated at logical time $t$, the activation is represented by an activation node $v \in \mathcal{V}_A(t)$.
Let $B_a(t)$ denote the local event buffer of agent $a$ at time $t$, and let $I_a(t) \;\subseteq\; B_a(t)$
denote the subset of events selected by the agent during the activation. These events correspond exactly to the set of incoming delivery edges of the activation node:
\begin{equation}
E_G^{\mathrm{in}}(v)
\;=\;
\{(e,v) \in \mathcal{E}_G(t)\}.
\label{eq:incoming_edges}
\end{equation}
We further assign a causal fate to each incoming interaction edge by defining a system-level \emph{edge rewrite semantics} through a mapping
$\phi_v : E_G^{\mathrm{in}}(v) \;\longrightarrow\; \{\textsc{consume}, \textsc{delay}, \textsc{reroute}, \textsc{discard}\}$.
This induces a local graph rewrite operator $\mathcal{R}_v : G(t) \;\longrightarrow\; G(t^+)$
that transforms the DIG by modifying, removing, or creating nodes and edges. We elaborate these operators next.

\textbf{Canonical Rewrite Operators.}
Let $O_a(t)\subseteq\mathcal{E}$ denote the new events generated by agent $a$ during activation; these become newly created event nodes in the DIG. The standard interaction primitives induce the following canonical edge rewrite patterns:

\textbf{(1) Respond (R):}
The agent acts on all buffered events,
\( I_a(t)=B_a(t) \) and \( B_a(t^+)=\emptyset \),
and labels all incoming edges as consumed,
\( \phi_v(e,v)=\textsc{consume} \).
The DIG is rewritten by removing all input events from the agent buffer and generating new event nodes, $G(t^+) = G(t)\setminus E_G^{\mathrm{in}}(v)\;\cup\;\{(v,e') \mid e'\in O_a(t)\}.$
\textbf{(2) Wait (W):}
The agent preserves its buffer,
\( B_a(t^+)=B_a(t) \),
and labels all incoming edges as delayed,
\( \phi_v(e,v)=\textsc{delay} \).
No graph modification is applied: $G(t^+)=G(t)$.
\textbf{(3) Reroute (RR):}
The agent forwards selected events,
\( B_a(t^+)=B_a(t)\setminus I_a(t) \),
and labels their edges as rerouted,
\( \phi_v(e,v)=\textsc{reroute} \).
Each such edge is redirected:$(e,v)\;\mapsto\;(e,v').$
\textbf{(4) Discard (D):}
The agent drops selected events,
\( B_a(t^+)=B_a(t)\setminus I_a(t) \),
and labels the edges as discarded,
\( \phi_v(e,v)=\textsc{discard} \).
The edges are removed:$(e,v)\;\mapsto\;\emptyset.$
\textbf{(5) Submit (S):}
A terminal Respond with
\( O_a(t)=\{e_\infty\} \),
yielding $G(t^+)=G(t)\setminus E_G^{\mathrm{in}}(v)\;\cup\;\{(v,e_\infty)\}.$

The overall system execution is the composition of local graph rewrites induced by activation nodes:
\begin{equation}
G_0 
\xrightarrow{\mathcal{R}_{v_1}}
G_1
\xrightarrow{\mathcal{R}_{v_2}}
\cdots
\xrightarrow{\mathcal{R}_{v_T}}
G_T,
\label{eq:global_rewrite}
\end{equation}
where each $\mathcal{R}{v_t} \in {(R), (W), (RR), (D), (S)}$ is determined by the edge labeling function $\phi{v_t}$.

\textbf{Edge Attribution.} We attribute each edge one of two attribution types: \emph{productive} or \emph{non-productive} (transient). Productive edges indicate interactions that advance the cooperative solution, while non-productive edges indicate interactions that do not advance it at the current step. Edges corresponding \textbf{Wait (W)} and \textbf{Discard (D)} are non-productive and drawn as dashed. For \textbf{Reroute (RR)}, the edge to the rerouting agent is non-productive and dashed, while the redirected edge to the final recipient is productive and solid. Fig.~\ref{fig:overview} (3--4) illustrate this distinction: solid lines denote productive edges, and dashed lines denote non-productive edges.



\textbf{Illustration.}
Cooperative multi-agent execution is thus a \emph{graph rewriting dynamical system}. As illustrated in Fig.~\ref{fig:overview} (3), agents coordinate not through predefined roles or control flow, but by applying local edge-level rewrites to the DIG. Emergent cooperation structure, system-level behavior, and failure modes arise from the composition of these local edge-level rewrite operators, making DIG the effective state space of cooperative intelligence.


\subsection{DIG as a Representation of Cooperative Intelligence}
\label{sec:dig_state_space}
In this section, we show that DIG representation $\{G(t)\}_{t\in\mathbb{Z}}$ is not merely a visualization artifact, but an \emph{intermediate operational state space} for system-level reasoning over cooperation. Thus, DIG provides the core representation of cooperative intelligence.


\textbf{DIG as a Labeled Causal Interaction Graph.}
For any logical time $t$, let
\begin{equation}
G(t) = \bigl(\mathcal{V}_A(t)\cup\mathcal{V}_E(t),\,\mathcal{E}_G(t),\,\phi_t\bigr),
\label{eq:labeled_dig}
\end{equation}
where $(\mathcal{V}_A(t)\cup\mathcal{V}_E(t),\mathcal{E}_G(t))$ is the directed bipartite causal topology defined in Sec.~\ref{sec:dig_definition}, and $\phi_t: \mathcal{E}_G(t) \longrightarrow \{\textsc{consume},\textsc{delay},\textsc{reroute},\textsc{discard}\}$
is the edge-action labeling induced by local rewrite operators (Sec.~\ref{sec:interaction_primitives}).  
Thus, each $G(t)$ is a fully labeled causal interaction graph, encoding both \emph{who interacted with whom} and \emph{how each interaction edge was operationally treated}.

This formalism characterizes cooperative behavior structurally. Given an execution trace $\mathcal{T}$, we group together all traces that induce the same time-indexed labeled DIG:
$\mathcal C(\mathcal T)=\{\mathcal T' \mid \Psi(\mathcal T')\cong\Psi(\mathcal T)\}$.
Here, $\cong$ denotes graph isomorphism that preserves bipartite node types, directed edges, and edge labels. Thus, cooperative behavior is characterized by observable interaction structure rather than agent internals. Equivalently, cooperation is encoded by the temporal evolution of DIG under the edge-level rewrite operators $\{\phi_t\}$, which specify how interactions are consumed, delayed, rerouted, or discarded.


\textbf{DIG as an Explainability Trace.}
The sequence $\{G(t)\}_{t\in\mathbb Z}$ forms an explainability trace of cooperative intelligence. Any system-level behavior or failure signal $f\in\mathcal F$ inferred from an execution trace $\mathcal T$ can be expressed as
$f=\mathcal I(\{G(t)\}_{t\in\mathbb Z})=\mathcal I(\{\phi_t(G(t))\}_{t\in\mathbb Z})$
for some graph-level inference operator $\mathcal I$. Thus, explanations depend only on the labeled causal structure $(\mathcal V_A\cup\mathcal V_E,\mathcal E_G,\phi_t)$ and not on unobservable agent internals. In this sense, DIG is not merely a representation of execution, but a causal explanation object for emergent multi-agent cooperation.


\begin{theorem}[Topological Inference Reduction]
\label{thm:topological_reduction}
Fix a class $\mathcal{C}$ of cooperative multi-agent systems whose observable semantics are defined by asynchronous activations and event passing, as in Sec.~\ref{sec:problem_formulation}. Let $\mathbb{T}$ be the space of observable execution traces, with $\mathcal{T}\in\mathbb{T}$ denoting one trace, and let $\Psi:\mathbb{T}\to\{G(t)\}_{t\in\mathbb{Z}}$ be the DIG construction operator.

For any system-level functional $\Phi:\mathbb{T}\to\mathcal{F}$ that depends only on observable interactions, equivalently, is invariant over traces with isomorphic DIGs, there exists a graph functional $\mathcal{I}:\{G(t)\}_{t\in\mathbb{Z}}\to\mathcal{F}$ such that, for every trace $\mathcal{T}\in\mathbb{T}$,
$\Phi(\mathcal{T})=\mathcal{I}(\Psi(\mathcal{T}))$.
\end{theorem}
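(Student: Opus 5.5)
This is a factorization statement. The map $\Phi$ is constant on the fibers of $\Psi$, taken up to isomorphism, so $\Phi$ descends to the image of $\Psi$. I will make this precise in three steps.

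First, I fix what ``invariant over traces with isomorphic DIGs'' means. It is exactly the equivalence relation $\mathcal{T}\sim\mathcal{T}'$ iff $\Psi(\mathcal{T})\cong\Psi(\mathcal{T}')$, i.e.\ $\mathcal{T}'\in\mathcal{C}(\mathcal{T})$ in the notation of Sec.~\ref{sec:dig_state_space}. Here $\cong$ preserves node types, directed edges, edge labels $\phi_t$, and the time indexing. The hypothesis then reads: $\mathcal{T}\sim\mathcal{T}'\Rightarrow\Phi(\mathcal{T})=\Phi(\mathcal{T}')$. I will check that $\sim$ is an equivalence relation. This is immediate, since $\cong$ is reflexive, symmetric and transitive, being composition and inversion of label- and type-preserving bijections.

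Second, I define $\mathcal{I}$ on $\mathrm{im}(\Psi)$. For a labeled time-indexed DIG $\mathcal{G}$ in the image, choose any $\mathcal{T}_{\mathcal{G}}$ with $\Psi(\mathcal{T}_{\mathcal{G}})\cong\mathcal{G}$, and set $\mathcal{I}(\mathcal{G}):=\Phi(\mathcal{T}_{\mathcal{G}})$. The choice can be made by the axiom of choice, or canonically if $\mathbb{T}$ is countable. To show $\mathcal{I}$ is well defined, suppose two traces $\mathcal{T}_1,\mathcal{T}_2$ both have DIGs isomorphic to $\mathcal{G}$. By transitivity $\mathcal{T}_1\sim\mathcal{T}_2$, so $\Phi(\mathcal{T}_1)=\Phi(\mathcal{T}_2)$. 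Consequently $\mathcal{I}$ is also isomorphism-invariant, which is the natural requirement for a ``graph functional.'' Outside the image, i.e.\ on graphs not realized by any trace of a system in $\mathcal{C}$, I extend $\mathcal{I}$ arbitrarily, for instance by a fixed default element of $\mathcal{F}$. This requires $\mathcal{F}\neq\emptyset$, which holds as soon as $\mathbb{T}\neq\emptyset$.

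Third, I verify the identity. For any $\mathcal{T}$, the graph $\Psi(\mathcal{T})$ lies in the image. Its chosen representative $\mathcal{T}_{\Psi(\mathcal{T})}$ satisfies $\mathcal{T}_{\Psi(\mathcal{T})}\sim\mathcal{T}$. Hence $\mathcal{I}(\Psi(\mathcal{T}))=\Phi(\mathcal{T}_{\Psi(\mathcal{T})})=\Phi(\mathcal{T})$. Equivalently, $\Phi$ factors through the quotient $\mathbb{T}/\!\sim$, which embeds into isomorphism classes of labeled DIGs via $\Psi$.

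The main obstacle is not the factorization, which is the universal property of the quotient. It is justifying the word ``equivalently'' in the hypothesis: that depending only on observable interactions is the same as being $\Psi$-isomorphism invariant. To address it, I would argue that $\Psi$ records every observable datum of a trace in the model of Sec.~\ref{sec:problem_formulation}. These data are the activations with their intervals $\tau_v$, event generation and receipt times $\tau_e$, generation and delivery edges, and the rewrite labels $\phi_v$ fixed by the operators (R), (W), (RR), (D), (S). Since $\Psi$ keeps all of these, any two traces with isomorphic DIGs agree on all observables up to renaming of agent and event identifiers. I would state this as the intended reading of the hypothesis. If payload contents $p_e$ are deemed observable, I would note that they are carried in the event nodes $e_G=(e,\tau_e)$, so the isomorphism must preserve them as node labels.
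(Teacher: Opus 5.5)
Your proposal is correct and follows the paper's own argument: $\Phi$ is constant on DIG-isomorphism classes, so $\mathcal{I}$ is defined on the image of $\Psi$ by evaluating $\Phi$ at a representative trace, the invariance hypothesis makes this independent of the representative, and $\Phi=\mathcal{I}\circ\Psi$ follows immediately. Three of your additions go beyond the paper but change nothing essential: choosing representatives up to isomorphism (the paper uses an exact preimage), extending $\mathcal{I}$ by a default value off the image, and discussing the word ``equivalently.''
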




The proof of Theorem~\ref{thm:topological_reduction} is given in Appendix~\ref{app:proof}. This result shows that \emph{all structurally observable system-level reasoning about cooperative intelligence reduces to inference over the DIG}. Thus, diagnosis, explanation, and intervention are performed not on hidden agent internals, but on the topology and edge-action structure of the evolving causal interaction graph.


\subsection{Problem-Solving in DIG}
\label{sec:dig_problem_solving}

We interpret cooperative problem solving as a \emph{structural transformation process} over the DIG. Given an execution trace $\mathcal{T}$ and its induced DIG sequence $\{G(t)\}_{t\in\mathbb{Z}}=\Psi(\mathcal{T})$, task progress is realized not through access to agent internals, but through the composition of local graph rewrite operators (Eq.~\eqref{eq:global_rewrite}). Specifically, each activation node $v_t\in \mathcal{V}_A$ induces a rewrite $\mathcal{R}_{v_t}$ through its edge-action operator $\phi_{v_t}$ and output event set $O_{a_t}(t)$.


Each activation node $v\in\mathcal{V}_A$ consumes an input set of events and produces an output set of events, as recorded by the incident edges in the DIG. Let
\begin{equation}
\begin{aligned}
I_v &= \{\, e \in \mathcal{V}_E \mid (e,v)\in \mathcal{E}_G \,\}, \qquad
O_v &= \{\, e' \in \mathcal{V}_E \mid (v,e')\in \mathcal{E}_G \,\}.
\end{aligned}
\label{eq:Iv_Ov_def}
\end{equation}
denote the input events delivered to $v$ and the output events generated by $v$, respectively. We interpret $v:\; I_v \;\mapsto\; O_v$ as a local problem-solving transformation.
%
An activation is \textbf{problem-generating} if it expands the unresolved work, $|O_v| > |I_v|$,
in which case each $p\in O_v$ is treated as a \emph{subproblem} spawned from context $I_v$. It is \textbf{problem-reducing} if $|O_v| \le |I_v|$,
in which case each $s\in O_v$ is treated as a \emph{subsolution} produced from context $I_v$. 

\section{Failure, Detection, and Healing}\label{sec:failures}

\begin{wrapfigure}{r}{0.4\textwidth}
    \centering
    \includegraphics[width=1\linewidth]{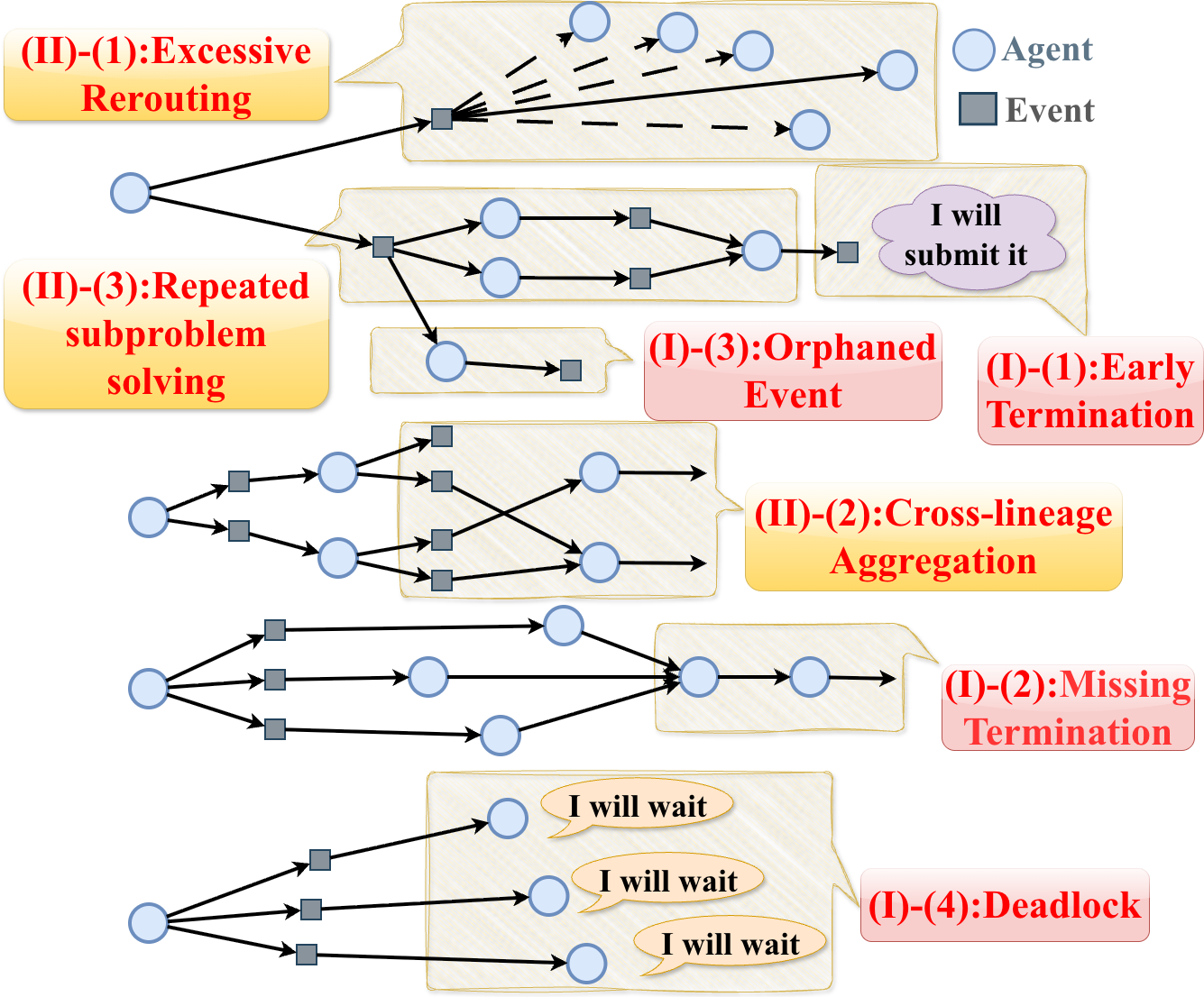}
    \caption{Structural failure patterns in DIG, showing collaboration-invariant violations as interaction structure.}
    \label{fig:failure_patterns}
\end{wrapfigure}
The system defines errors not by agents' \textit{Internal Reasoning}, but by observable \textit{Interaction Primitives} (actions taken by agents) and the corresponding DIG structures.

\textbf{Failure Taxonomy.} 
Empirically, cooperative general agent systems exhibit failures that  arise not from isolated reasoning mistakes within individual agents, but from breakdowns in interaction structure and execution dynamics. Fig.~\ref{fig:failure_patterns} summarizes the major classes of failures observed in our experiments.

The taxonomy is defined in terms of observable execution structure, without access to agent internals or task semantics. We consider two classes of violated structural or temporal invariants: failures in preserving and exhausting work (Reachability and Coverage) and warnings indicating inefficient or risky handling of emitted work (Progress). 
In both classes, the detection method remains task- and domain-agnostic, as it operates on the interaction structure.

\textbf{Failure Detection and Healing.} 
Whenever a new event $e$ is generated at time $t$, the system temporarily blocks its delivery and evaluates the current $G(t)$ for error patterns shown in Fig.~\ref{fig:failure_patterns}. If a failure is detected, the system may \textit{heal} it by (i) injecting new information into it before delivery, (ii) injecting and rerouting it to a recipient set, or (iii) creating a new event and delivering it to a recipient set. We broadly categorize failures into two types: (I) reachability and (II) progress failures. We detail the detection and healing procedure for each failure pattern below.

\underline{\textbf{(I) Reachability and Termination.}} All reachable work in the DIG should persist until consumed. The system should emit a single \textsc{Submit} event once all reachable work has been consumed.

\textit{(1) Early termination (ET).} 
\textit{Detection:} A \textsc{Submit} event is generated despite unresolved reachable work.
Formally, \(e_\infty \in V_E(t)\) and \(\exists e\in R(t)\) without a directed path from \(e\) to \(e_\infty\). \textit{Healing:} apply (ii) by injecting information about unresolved events and rerouting it to the agent that issued \textsc{Submit}.

\textit{(2) Missing termination (MC).} 
\textit{Detection:} All reachable work is exhausted, but no \textsc{Submit} event is generated. 
Formally, \(R(t)=\emptyset\) and \(e_\infty\notin V_E(t)\) for $t$ exceeding a threshold. \textit{Healing:} apply (i) by injecting information that all reachable work is exhausted.

\textit{(3) Orphaned event (OE).} 
\textit{Detection:} An event is generated but has no recipients for longer than a reasonable time window.
Formally, \(\exists e\in V_E(t)\), \(e\neq e_\infty\), such that its recipient set is empty,
\(R_t(e)=\emptyset\) for longer than a reasonable time window. \textit{Healing:} apply (ii) by injecting status information and rerouting the event back to its generating agent.

\textit{(4) Deadlock (DL).} 
\textit{Detection:} Reachable work remains, but no activation occurs within a reasonable time window.
Formally, \(R(t)\neq\emptyset\) while \(V_A(t)=\emptyset\) for all $t$ within a reasonable time window. \textit{Healing:} apply (iii) by creating a new event and broadcasting it to all agents to restart activity.

\underline{\textbf{(II) Progress.}} Generated events should be consumed downstream within a reasonable time; repeated deferral, rerouting, or redundant handling indicates risk.

\textit{(1) Excessive rerouting (ER).} 
\textit{Detection:} An event is repeatedly \textsc{Reroute}d across activations without ever being \textsc{Consume}d.
Formally, \(\exists e\in V_E(t)\) such that its delivery edges are rerouted more than a reasonable number of times. \textit{Healing:} apply (i) by injecting information indicating repeated rerouting.

\textit{(2) Cross-lineage aggregation (CLA).} 
\textit{Detection:} Events from different problem-generating activations are delivered to the same recipient agent.
Formally, there exists \(v\in V_A(t)\) and \(e\neq e'\in I_v\) such that there is no activation \(v_g\)
whose generated events have a directed path to both \(e\) and \(e'\). \textit{Healing:} apply (i) by injecting ancestry information into each event.

\textit{(3) Repeated subproblem solving (RSP).} 
\textit{Detection:} Multiple problem-reducing activations \textsc{Consume} the same upstream event $p$.
Formally, \(\exists p\in V_E(t)\) and problem-reducing activations \(v\neq v'\) such that \(p\in I_v \cap I_{v'}\). \textit{Healing:} apply (i) by injecting information on which part of the result may be repeated.


Note that these depend entirely on agent interaction patterns as encoded in the DIG, allowing us to improve problem-solving without modifying agent internals or constraining their interactions.


\section{Experiment Design}
\label{sec:experiment_design}


\begin{wrapfigure}{r}{0.55\textwidth}
    \centering
    \vspace{-25pt}
    \includegraphics[width=1\linewidth]{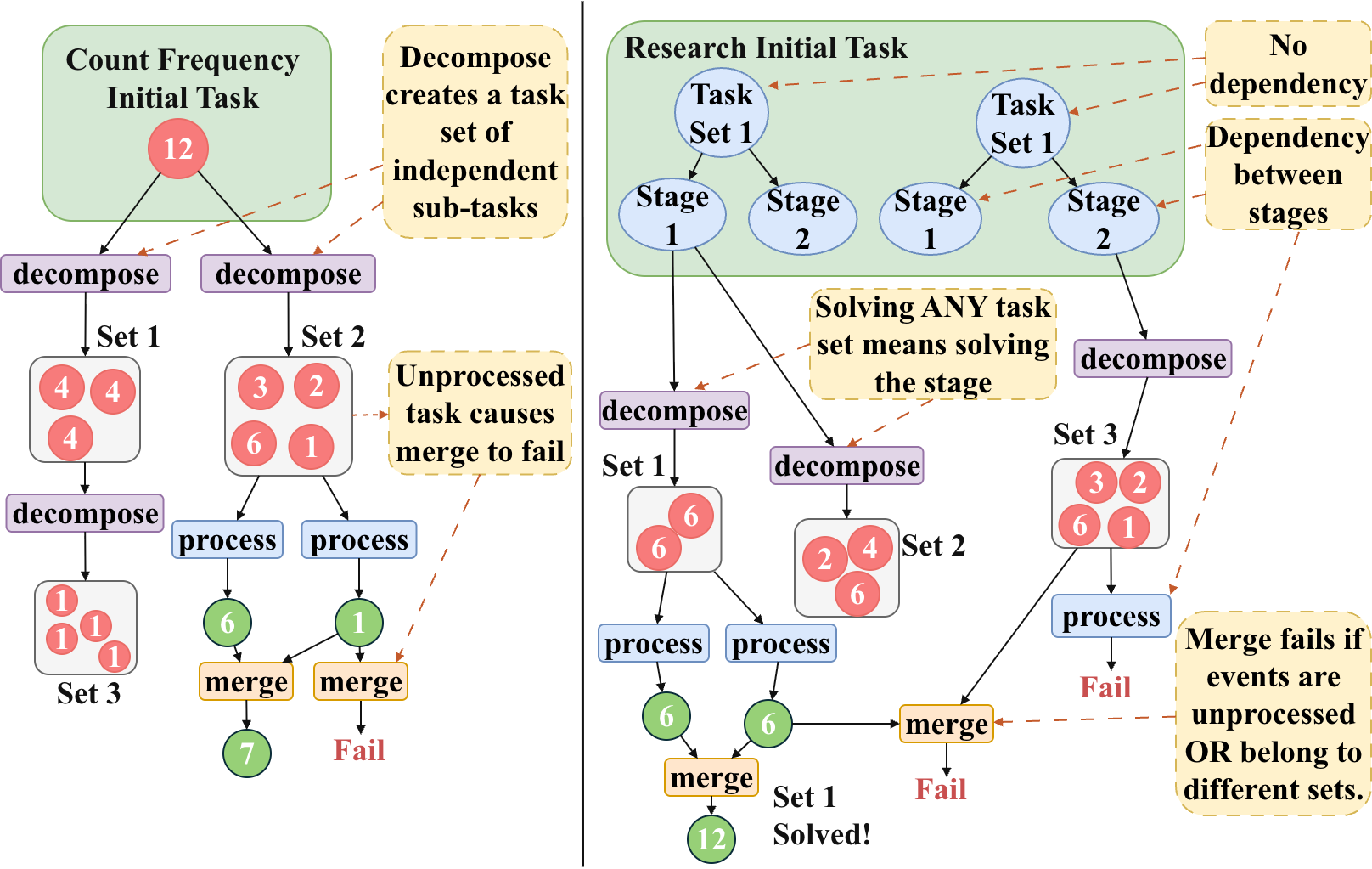}
    \caption{\textbf{Task structures.} \textbf{Left:} Count Frequency (fully parallelizable), where tasks are decomposed into independent subtasks that can be processed and merged once completed. \textbf{Right:} Research Job (dependency-constrained), where tasks are organized into stages with inter-stage dependencies, and violations cause processing or merging failures.}
    \label{fig:task_designs}
    \vspace{-15pt}
\end{wrapfigure}

\textbf{Open-ended, verifiable task-solving.}
We introduce a task-solving setting to evaluate how agent interactions modify task structure and how solving progress can be verified from those structural changes. Each agent \(i\in I\) has an integer capability \(z_i\), and each task component \(g\) has an integer load \(\ell_g\), a state \(s_g\in\{\textsc{Unprocessed},\textsc{Processed}\}\), and optional dependency constraints. There is no fixed structure for solving the task; it can be decomposed in different ways, assigned to different agents, or solved in different orders. When agents choose \textsc{Respond} during an activation, they may invoke tools. We provide three tools for changing task structures and states: \textsc{Decompose}, \textsc{Process}, and \textsc{Merge}.

\textsc{Decompose} splits a task into assigned subtasks, \textsc{Process} marks a feasible task component as processed, and \textsc{Merge} combines processed and compatible components. Each decomposition, processing step, and merge then becomes a trackable task-structure change, making problem-solving structurally verifiable. Agent prompts and detailed tool semantics are provided in Appendices~\ref{app:agent} and~\ref{app:tool_semantics}. We penalize both incomplete and redundant task solving by measuring final task progress as:
\begin{equation}
    \mathrm{coverage}
    =
    \frac{\min(S,N)}{N},
    \quad
    \mathrm{over}
    =
    \frac{\max(S-N,0)}{N},
    \quad
    \mathrm{error}
    =
    (1-\mathrm{coverage})+\mathrm{over},
    \label{eq:progress_error}
\end{equation}
where \(N\) is the target and \(S\) the submitted problem size. 

\begin{wraptable}{r}{0.57\textwidth}
\centering
\scriptsize
\vspace{-16pt}
\setlength{\tabcolsep}{0.5pt}
\begin{tabular}{l c c | c c c}
\toprule
Setting & Agents & Load & \# Failure ($\downarrow$) & Error ($\downarrow$) & Time ($\downarrow$) \\
\midrule
\multicolumn{6}{l}{\textbf{Homogeneous (1000)}} \\
MAS-Only & 4 & 4000  & $\mathbf{2.11 \pm 0.31}$  & $0.083 \pm 0.118$          & $13.0 \pm 9.1$ \\
MAS+DIG  & 4 & 4000  & $2.60 \pm 1.80$           & $\mathbf{0.025 \pm 0.075}$ & $\mathbf{10.7 \pm 7.8}$ \\
\deltarow
$\Delta$\% & & & \dneg{$+23\%$} & \dpos{$-70\%$} & \dpos{$-18\%$} \\
MAS-Only & 4 & 8000  & $\mathbf{2.60 \pm 0.66}$  & $0.688 \pm 0.084$          & $\mathbf{14.3 \pm 8.4}$ \\
MAS+DIG  & 4 & 8000  & $9.10 \pm 1.45$           & $\mathbf{0.588 \pm 0.138}$ & $20.1 \pm 9.4$ \\
\deltarow
$\Delta$\% & & & \dneg{$+250\%$} & \dpos{$-15\%$} & \dneg{$+41\%$} \\
\cline{1-6}
MAS-Only & 6 & 6000  & $\mathbf{2.50 \pm 1.50}$  & $0.100 \pm 0.249$          & $\mathbf{9.2 \pm 1.0}$ \\
MAS+DIG  & 6 & 6000  & $2.60 \pm 1.80$           & $\mathbf{0.017 \pm 0.050}$ & $10.4 \pm 2.8$ \\
\deltarow
$\Delta$\% & & & \dneg{$+4\%$} & \dpos{$-83\%$} & \dneg{$+13\%$} \\
MAS-Only & 6 & 12000 & $\mathbf{3.50 \pm 1.03}$  & $0.850 \pm 0.033$          & $\mathbf{19.4 \pm 12.7}$ \\
MAS+DIG  & 6 & 12000 & $8.80 \pm 0.98$           & $\mathbf{0.775 \pm 0.065}$ & $19.7 \pm 9.1$ \\
\deltarow
$\Delta$\% & & & \dneg{$+151\%$} & \dpos{$-9\%$} & \dneu{$+2\%$} \\
\midrule
\multicolumn{6}{l}{\textbf{Heterogeneous ([..., 1000,100,10,1])}} \\
MAS-Only & 4 & 1111  & $\mathbf{1.80 \pm 0.40}$  & $\mathbf{0.000 \pm 0.000}$ & $\mathbf{8.9 \pm 0.9}$ \\
MAS+DIG  & 4 & 1111  & $2.50 \pm 1.86$           & $0.001 \pm 0.003$          & $9.4 \pm 2.5$ \\
\deltarow
$\Delta$\% & & & \dneg{$+39\%$} & \dneu{$\approx 0$} & \dneg{$+6\%$} \\
MAS-Only & 4 & 2222  & $\mathbf{2.60 \pm 0.80}$  & $0.735 \pm 0.266$          & $\mathbf{33.0 \pm 7.9}$ \\
MAS+DIG  & 4 & 2222  & $8.60 \pm 1.63$           & $\mathbf{0.533 \pm 0.357}$ & $42.9 \pm 8.6$ \\
\deltarow
$\Delta$\% & & & \dneg{$+231\%$} & \dpos{$-27\%$} & \dneg{$+30\%$} \\
\cline{1-6}
MAS-Only & 6 & 111111 & $\mathbf{2.10 \pm 0.30}$ & $0.100 \pm 0.268$          & $\mathbf{12.2 \pm 8.1}$ \\
MAS+DIG  & 6 & 111111 & $5.30 \pm 3.49$           & $\mathbf{0.092 \pm 0.209}$ & $16.8 \pm 9.4$ \\
\deltarow
$\Delta$\% & & & \dneg{$+152\%$} & \dpos{$-8\%$} & \dneg{$+38\%$} \\
MAS-Only & 6 & 222222 & $\mathbf{2.70 \pm 1.10}$  & $0.847 \pm 0.279$          & $\mathbf{32.5 \pm 13.2}$ \\
MAS+DIG  & 6 & 222222 & $10.10 \pm 1.64$          & $\mathbf{0.568 \pm 0.372}$ & $43.8 \pm 11.0$ \\
\deltarow
$\Delta$\% & & & \dneg{$+274\%$} & \dpos{$-33\%$} & \dneg{$+35\%$} \\
\bottomrule
\end{tabular}
\caption{\textbf{CF} with parallelizable tasks. Mean \(\pm\) std (10 runs). \textbf{Bold} = better. \(\Delta\%\) = MAS+DIG vs. MAS-Only.}
\label{tab:cf_experiments}
\vspace{-10pt}
\end{wraptable}
\textbf{Experiment Settings.}
We instantiate this setting with two task structures, shown in Fig.~\ref{fig:task_designs}: \textbf{(i)} a \textit{fully parallelizable structure}, implemented by \textbf{Count Frequency (CF)}, where independent components can be processed in any order and merged after completion; \textbf{(ii)} a \textit{dependency-constrained structure}, implemented by \textbf{Research Job (RJ)}, where components can be processed only after earlier stages are solved. We evaluate both \textbf{homogeneous} and \textbf{heterogeneous} agent teams: \textbf{(i)} in the \textit{homogeneous} setting, all agents have the same capability, \(z_i=1000\); \textbf{(ii)} in the \textit{heterogeneous} setting, capabilities vary, e.g., \(z_i\in\{1,10,100,1000\}\). We compare \textbf{MAS-Only}, which executes without DIG-based healing, with \textbf{MAS+DIG}, which detects and heals structural failures during execution. We further include \textbf{MAS+LLM Judge}, following prior work by~\cite{cemri2025multi}, which invokes an LLM to diagnose failures using a predefined error taxonomy. Since this baseline is not designed for real-time monitoring, we adapt it by running the judge periodically every \(n\) activations using our error taxonomy (see Appendix~\ref{app:llm-judge-prompt}). We report final task error, detected errors, and running time, with statistical analysis in Appendix~\ref{app:significance-tests}.

\begin{figure*}[t]
    \centering
    \begin{subfigure}{0.49\textwidth}
        \centering
        \includegraphics[width=\linewidth]{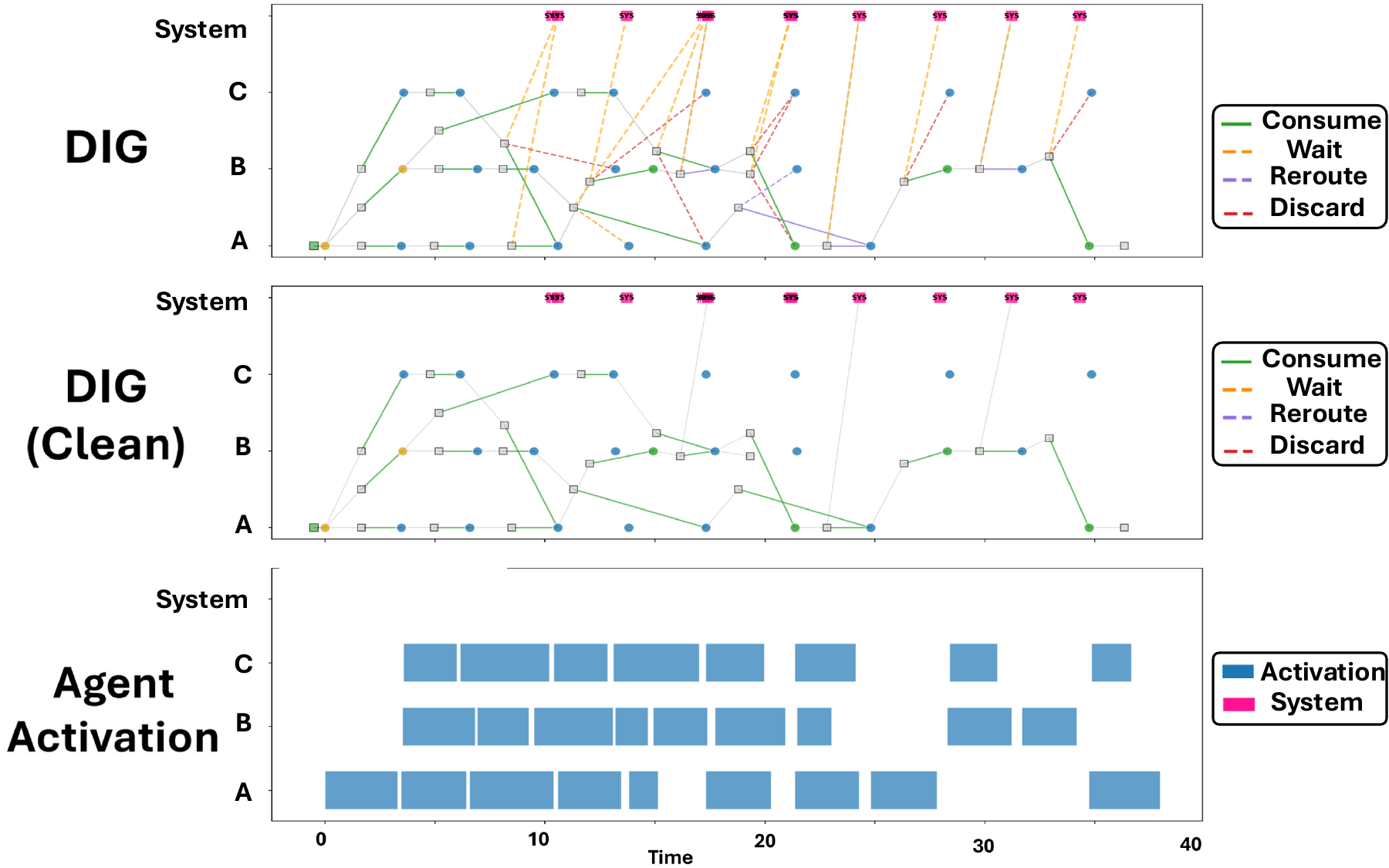}
        \caption{MAS + DIG (3 agents)}
        \label{fig:n3}
    \end{subfigure}
    \hfill
    \begin{subfigure}{0.49\textwidth}
        \centering
        \includegraphics[width=\linewidth]{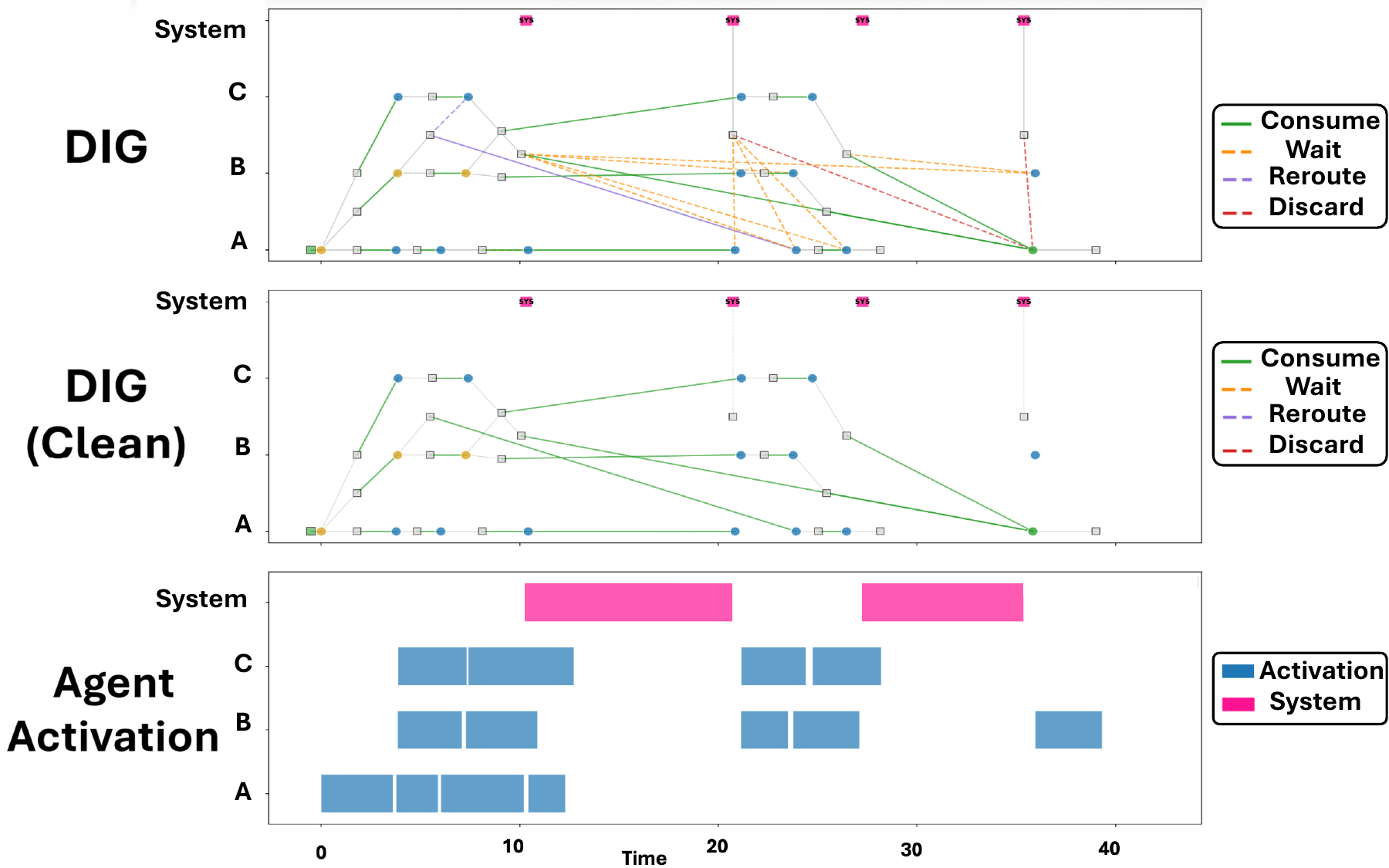}
        \caption{MAS + LLM Judge}
        \label{fig:llm}
    \end{subfigure}
    \caption{With DIG (a), the timeline of agent activations, event propagation, and edge-level rewrites indicates stable coordination and successful task termination. Using an LLM as judge (b), the DIG trace reveals unstable coordination, excessive waiting and rerouting, and delayed interventions.}
    \label{fig:dig_illustrations}
    \hfill
\vspace{-0.25in}
\end{figure*}

\section{Experiment Results}\label{sec:results}
\begin{wraptable}{r}{0.45\textwidth}
\centering
\scriptsize
\vspace{-40pt}
\setlength{\tabcolsep}{0.5pt}
\begin{tabular}{l | c c c}
\toprule
Setting & \# Failure ($\downarrow$) & Error ($\downarrow$) & Time ($\downarrow$) \\
\midrule
\multicolumn{4}{l}{\textbf{Homogeneous (1000)}} \\
MAS-Only & $\mathbf{6.20 \pm 2.96}$ & $0.50 \pm 0.30$          & $24.8 \pm 31.8$ \\
MAS+DIG  & $8.40 \pm 1.69$          & $\mathbf{0.28 \pm 0.28}$ & $\mathbf{17.9 \pm 2.2}$ \\
\deltarow
$\Delta$\% & \dneg{$+35\%$} & \dpos{$-44\%$} & \dpos{$-28\%$} \\
\midrule
\multicolumn{4}{l}{\textbf{Heterogeneous ([..., 1000,100,10,1])}} \\
MAS-Only & $\mathbf{3.50 \pm 1.43}$ & $0.55 \pm 0.15$          & $\mathbf{43.4 \pm 32.9}$ \\
MAS+DIG  & $8.70 \pm 1.85$          & $\mathbf{0.45 \pm 0.15}$ & $68.6 \pm 26.7$ \\
\deltarow
$\Delta$\% & \dneg{$+149\%$} & \dpos{$-18\%$} & \dneg{$+58\%$} \\
\bottomrule
\end{tabular}
\caption{\textbf{RJ} with two problem sets, each containing two sequential stages. Stage load is 4000 in the homogeneous setting and 1111 in the heterogeneous setting; each experiment uses 4 agents. Mean \(\pm\) std (10 runs). \textbf{Bold} = better. \(\Delta\%\) = MAS+DIG vs. MAS-Only.}
\label{tab:rd_experiments}
\vspace{-20pt}
\end{wraptable}
We first analyze DIG results on both Count Frequency and Research Job, and then present a 20-agent case study to demonstrate DIG's scalability.


Tab.~\ref{tab:cf_experiments} and~\ref{tab:rd_experiments} summarize the main results. Across both task structures, MAS+DIG reduces final error at the cost of detecting more failures and, in some settings, longer runtime. On \textbf{Count Frequency (CF)}, DIG consistently improves final error in all settings, including an \(83\%\) reduction in the \textit{homogeneous} 6-agent load-6000 setting and a \(33\%\) reduction in the \textit{heterogeneous} 6-agent load-222222 setting. Errors mainly come from incomplete coverage and redundant work; DIG improves subtask coverage and prevents premature submission, even in settings with larger loads and more heterogeneous capabilities that create more possible execution paths. On \textbf{Research Job (RJ)}, where errors are further complicated by dependency violations that cause failed processing and merging, DIG can detect these structural failures before they propagate to later stages, with \(44\%\) and \(18\%\) error reductions in the \textit{homogeneous} and \textit{heterogeneous} settings respectively.

\begin{wrapfigure}{r}{0.42\textwidth}
    \centering
    \vspace{-10pt}
    \includegraphics[width=\linewidth]{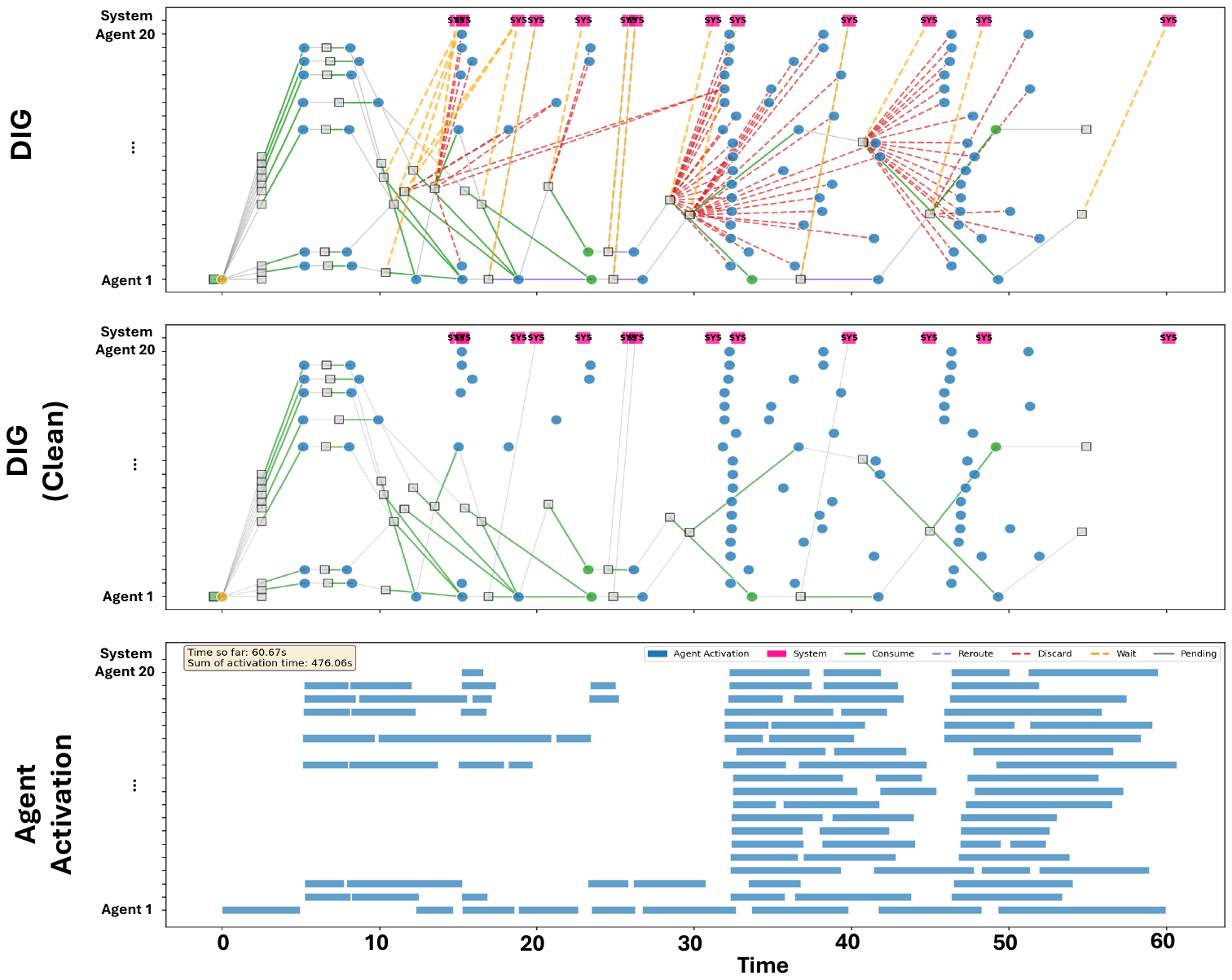}
    \caption{\textbf{MAS + DIG (20 agents):} Large-scale DIG showing structured task propagation, controlled rerouting, and stable convergence despite dense interactions and high concurrency.}
    \label{fig:20agent}
    \vspace{-10pt}
\end{wrapfigure}
These gains come with a trade-off: DIG keeps the system active to repair failures, which may expose additional failures and increase runtime. MAS appears to have fewer detected errors and shorter runtime simply because it leaves failures unresolved and may submit incomplete solutions. The traces in Appendix~\ref{app:exp} show the runtime traces and interaction patterns structurally: without DIG, interactions are more fragmented, with redundant work, rerouting, and unresolved events; with DIG, task propagation is more organized and termination is more stable. Compared with MAS+LLM Judge (Fig.~\ref{fig:n3} and~\ref{fig:llm}), DIG avoids expensive semantic judging and provides more targeted, real-time intervention during execution. The LLM Judge results also show that \textbf{not all interventions are helpful}: untargeted or delayed interventions can increase overhead and even worsen performance, while DIG's structure-driven interventions are more directly tied to the observed failure patterns.

\textbf{Case Study with 20 Agents.} We conduct a 20-agent case study on CF with load \(100{,}000\) to test DIG under high interaction complexity. Without DIG-based healing, MAS fails to produce a result within 120 seconds; with DIG, the same 20-agent system produces a valid answer within 70 seconds using 92 calls. Fig.~\ref{fig:20agent} shows the resulting DIG trace and activation timeline. Even with many agents operating in parallel, DIG detects interaction failures, including 1 missing completion error, 8 repeated-effort incidents, 2 dependency warnings, 1 orphaned event, and 4 early termination errors. These observations highlight that \textit{multi-agent collaboration is non-trivial}: with more agents, the system becomes susceptible to interaction failures, making scalable reliability a central challenge. DIG can identify and heal such failures without assuming a fixed agent count, agent type, or problem domain.

\section{Conclusion}\label{sec:conclusion}

We introduce DIG, a Dynamic Interaction Graph framework for evaluating multi-agent collaboration. DIG models collaboration as a time-evolving causal network of agent activations and interactions, making collaborative dynamics observable and explainable. To the best of our knowledge, DIG is the first framework enabling real-time identification, interpretation, and correction of collaboration-induced error patterns directly from agents' interaction paths. We view DIG as an open framework and welcome extensions to more agents and domains.

\begin{ack}

This work was supported in part by the Office of Naval Research under grant N000142412073 and the National Science Foundation under grants CNS-2533813 and CNS-2312761.
\end{ack}

{\small
\bibliography{references}
\bibliographystyle{plainnat}
}






\clearpage
\appendix

\section{Theoretical Results}\label{app:proof}

\subsection{Proof for Theorem.~\ref{thm:topological_reduction}}
\begin{proof}
Because $\Phi$ depends only on observable interactions, it is constant on each behavioral equivalence class induced by DIG isomorphism. Concretely, for any $\mathcal{T}_1,\mathcal{T}_2\in\mathbb{T}$,
\begin{equation}
\Psi(\mathcal{T}_1)\cong \Psi(\mathcal{T}_2)
\quad\Longrightarrow\quad
\Phi(\mathcal{T}_1)=\Phi(\mathcal{T}_2).
\label{eq:phi_constant_on_classes}
\end{equation}
Define $\mathcal{I}$ on the range of $\Psi$ by selecting any trace $\mathcal{T}\in\mathbb{T}$ such that $\Psi(\mathcal{T})=\{G(t)\}_{t\in\mathbb{Z}}$ and setting
\begin{equation}
\mathcal{I}\bigl(\{G(t)\}_{t\in\mathbb{Z}}\bigr)
\triangleq
\Phi(\mathcal{T}).
\label{eq:def_I}
\end{equation}
This definition is well-defined: if another trace $\mathcal{T}'\in\mathbb{T}$ satisfies $\Psi(\mathcal{T}')=\{G(t)\}_{t\in\mathbb{Z}}$, then $\Psi(\mathcal{T}')\cong\Psi(\mathcal{T})$, so Eq.~\eqref{eq:phi_constant_on_classes} gives $\Phi(\mathcal{T}')=\Phi(\mathcal{T})$. Hence Eq.~\eqref{eq:def_I} does not depend on the chosen representative trace.

Finally, for any trace $\mathcal{T}\in\mathbb{T}$,
\begin{equation}
(\mathcal{I}\circ \Psi)(\mathcal{T})
=
\mathcal{I}\bigl(\Psi(\mathcal{T})\bigr)
=
\Phi(\mathcal{T}),
\end{equation}
where the second equality follows from Eq.~\eqref{eq:def_I}. Therefore, $\Phi=\mathcal{I}\circ\Psi$.
\end{proof}




\clearpage
\section{General Agent}\label{app:agent}

\begin{tcolorbox}[
  breakable,
  title={Agent Instructions Template},
  colback=white,
  colframe=black,
  fonttitle=\bfseries,
  boxrule=0.8pt,
  left=1em,
  right=1em,
  top=1em,
  bottom=1em
]
\ttfamily\footnotesize
You are \{agent\_name\}, an autonomous agent in a cooperative multiagent problem-solving system.

GOAL: Work with your collaborators to solve the problem in the SHORTEST TIME (minimum stages) possible. Coordinate efficiently and avoid redundant work.

PROBLEM CONTEXT

\{problem\_spec\}

\{stage\_progress\}

AVAILABLE AGENTS: \{agent\_list\}

INPUT EVENT ACTIONS

For each pending event, specify ONE action:

``consume'' - Process this event. MUST produce output event(s) OR use a tool. If you consume, you MUST generate something (event or tool call). If you're not ready to process it, use ``wait'' instead. If you don't want it, use ``discard'' instead.
``reroute'' - Forward to other agents. MUST specify reroute\_to recipients.
``discard'' - Drop permanently (for duplicates/irrelevant events).
``wait'' - Keep in buffer for later (need more data first).

HOW TO PICK ACTION FOR EACH EVENT

For PROBLEM events (without raw data): CONSUME: you MUST use a tool (split\_problem\_at\_high\_level, use\_skill, or get\_raw\_data\_of\_problem). REROUTE: forward to another agent if you're busy. WAIT: keep for later if you have higher priority work.

For SKILL RESULT events (info.skill is set): CONSUME: read info.skill\_result, then merge with other partials or submit. WAIT: if you are expecting more partial results before merging.

For RAW DATA events: CONSUME: compute solution and output a solution event; share your solution to others to speed up the process. WAIT: if you need more data first.

For SOLUTION events: CONSUME: aggregate with other solutions you have. DISCARD: if it's a duplicate of information you already have. WAIT: if you're expecting more solutions to aggregate.

AGGREGATION: When consuming multiple solution events, reduce redundancy and output ONE combined solution.

SUBMIT: Set is\_final\_answer=True when your solution is complete or good enough. To submit after a merge: include yourself in merge recipients, then in the NEXT activation call use\_skill(submit, result\_event\_id=<merged\_event\_id>) and set is\_final\_answer=True. NEVER call merge and submit in the same activation --- always two separate activations.

HANDLING MULTIPLE INPUTS: Prioritize aggregation if you have multiple solutions. Stay focused on one task type per activation.

EVENT TYPES

``problem'' - A task to solve. Use use\_skill(decompose/process) to handle it server-side.
``skill result'' - Compact result from use\_skill(process/merge). Check info.skill\_result. Merge partials with use\_skill(merge) or use use\_skill(submit) when complete.
``solution'' - Final answer. ONLY create via use\_skill(submit) or is\_final\_answer=True.
``process\_error'' - Your process call was rejected because the problem exceeds your capacity. info.error\_type = `capacity\_exceeded', info.problem\_id = the problem you tried to process, info.problem\_size = actual number of elements, info.agent\_capacity = your maximum capacity. You MUST handle this event. Two options: (a) Decompose: use\_skill(decompose, problem\_id, assignments=\{AgentX: size1, AgentY: size2, ...\}) so each chunk size <= the recipient's capacity. (b) Reroute: use\_skill(decompose, problem\_id, assignments=\{CapableAgent: full\_size\}) to delegate the whole problem to one agent with sufficient capacity.
``merge\_error'' - Your merge call was rejected because one or more tasks have not been processed yet (no skill\_result available). info.error\_type = `unprocessed\_tasks', info.attempted\_event\_ids = the event IDs you passed to merge, info.unprocessed = list of (event\_id, load) pairs for unprocessed tasks, info.message = human-readable description of what went wrong. You MUST handle this event. First process the unprocessed tasks: use\_skill(process, problem\_id) for each unprocessed task, then retry the merge once all tasks have skill\_results.

\{event\_examples\}

TOOLS

If you consume a PROBLEM event without raw data, you MUST use a tool (split\_problem\_at\_high\_level, use\_skill, or get\_raw\_data\_of\_problem). Prefer use\_skill(decompose) over split\_problem\_at\_high\_level for automated splitting. Prefer use\_skill(process) over get\_raw\_data\_of\_problem to avoid raw data in the prompt. Do NOT also create out\_events for the same consumed PROBLEM. You MUST still create out\_events for any other consumed inputs not handled by a tool (e.g., aggregated solution events). NEVER embed raw result data (counts, arrays, computed values) in out\_events payloads. To share a result with another agent, include them in the recipients of use\_skill(process) or use\_skill(merge) --- the skill result event will be delivered to them directly. You can also reroute a received skill result event to additional agents. Other agents merge results using use\_skill(merge, event\_ids=[...]) with the event IDs.

\{tools\_description\}

\{capability\_explain\}

OUTPUT FORMAT

input\_actions: REQUIRED. \{event\_id, action, reroute\_to (if rerouting)\}
is\_final\_answer: Set to True when you want to SUBMIT a final solution. If you have a reasonable answer, submit it!
tool\_requests OR out\_events (mutually exclusive)
reasoning: \{observation, thought, action\}
FINAL SUBMISSION payload must include: \{"type": "solution", "problem\_id": "P", "solution": \{...\}\}
NO EMPTY ACTIVATIONS: You MUST use tools OR generate output events (or both). You cannot do neither.

REASONING GUIDANCE

Be compact. Include only important details needed to justify actions and help teammates.
\end{tcolorbox}

\newpage
\section{LLM Judge}
\label{app:llm-judge-prompt}

\begin{tcolorbox}[
  title={Error Taxonomy for Cooperative Multi-Agent Systems},
  colback=white,
  colframe=black,
  fonttitle=\bfseries,
  boxrule=0.8pt,
  left=1em,
  right=1em,
  top=1em,
  bottom=1em
]

This document defines a structural failure taxonomy for cooperative multi-agent systems in terms of
observable interaction patterns in the Dynamic Interaction Graph (DIG), independent of agent internals
or task semantics.

\medskip

\textbf{Reachability and Termination (Failures)}

All reachable work in the DIG should persist until some activation consumes it. The system should emit
a single \textsc{Submit} event, and only after all reachable work has been consumed.

\begin{itemize}
\item \textbf{Early termination (ET)}: a \textsc{Submit} event is generated while some reachable work remains unconsumed.

\item \textbf{Missing termination (MC)}: all reachable work has been consumed, but no \textsc{Submit} event is generated within a reasonable time window.

\item \textbf{Orphaned event (OE)}: an event is generated but becomes unreachable before any activation can consume it, either because it is discarded or because it is assigned invalid recipients.

\item \textbf{Deadlock (DL)}: reachable work remains, but all activated agents wait indefinitely, making progress impossible.
\end{itemize}

\medskip

\textbf{Progress (Warnings)}

Generated events should be consumed downstream within a reasonable time; repeated deferral, rerouting,
or redundant handling indicates risk.

\begin{itemize}
\item \textbf{Excessive rerouting (ER)}: an event is repeatedly \textsc{Reroute}d across activations without any activation \textsc{Consume}ing it to generate downstream events.

\item \textbf{Cross-lineage aggregation (CLA)}: an activation \textsc{Consume}s events from different lineages.

\item \textbf{Repeated subproblem solving (RSP)}: multiple problem-reducing activations generate outputs whose coverage overlaps on the same upstream event.
\end{itemize}

\medskip

\textbf{Intended Use}
\begin{itemize}
\item Monitoring emergent cooperation
\item Early failure detection
\item Structure-driven system healing
\item Explainable debugging of multi-agent executions
\end{itemize}

\end{tcolorbox}

\begin{tcolorbox}[
  title={LLM Judge System Prompt},
  colback=white,
  colframe=black,
  fonttitle=\bfseries,
  boxrule=0.8pt,
  left=1em,
  right=1em,
  top=1em,
  bottom=1em
]
\small
You are an expert system monitor for cooperative multi-agent systems.
Your role is to analyze the Dynamic Interaction Graph (DIG) and detect
structural errors based solely on observable interaction patterns.

You do not have access to agent internals, reasoning traces, task semantics,
or predefined workflows. All judgments must be derived from DIG topology,
event lineage, and interaction primitives (CONSUME, DELAY, REROUTE, DISCARD,
SUBMIT).
\medskip

\textbf{Error Taxonomy \{file\_input\}}
\medskip

\textbf{DIG Log \{file\_input\}}
\medskip

\textbf{Your Task}
\begin{itemize}
\item Identify errors using the taxonomy above
\item Assess severity (critical/high/medium/low)
\item Recommend conservative interventions only when necessary
\item Provide clear reasoning for your decisions
\end{itemize}

\medskip
\textbf{Intervention Methods}
\begin{itemize}
\item \textbf{inject\_system\_message}: Add guidance to existing event and optionally reroute to different agents
\item \textbf{create\_system\_event}: Create new intervention event with message to specific agents
\end{itemize}

\medskip
\textbf{Intervention Methods}

\begin{itemize}
\item \textbf{inject\_info}: Inject new information into the blocked event before delivery.
\item \textbf{inject\_and\_reroute}: Inject guidance into the blocked event and reroute it to a specified recipient set.
\item \textbf{create\_system\_event}: Create a new system-level event and deliver it to a specified recipient set.
\end{itemize}

\medskip
\textbf{Principles}
\begin{itemize}
\item Intervene when error is clear and intervention helps
\item Prioritize FAILURES over RISKS
\item Provide actionable, specific recommendations
\item Respect agent autonomy where appropriate
\end{itemize}

\end{tcolorbox}

\newpage
\section{Tool Semantics}
\label{app:tool_semantics}

Agents may invoke tools only during \textsc{Respond} activations. We use three tools to modify task structures and states:
\[
    \textsc{Decompose}(g,\{i:\ell_i\}_{i\in I}),
    \qquad
    \textsc{Process}(g),
    \qquad
    \textsc{Merge}(\{g_k\}_{k=1}^{K}).
\]

\textbf{\textsc{Decompose}.}
\textsc{Decompose} partitions task \(g\) into subtasks assigned to agents \(i\), where \(\ell_i\) is the load assigned to agent \(i\). If the total assigned load exceeds the load of \(g\), the decomposition fails. If the total assigned load is smaller, the unassigned remainder is generated as a new task and routed back to the source agent.

\textbf{\textsc{Process}.}
\textsc{Process} attempts to change a task component from \textsc{Unprocessed} to \textsc{Processed}. The call succeeds only if the acting agent has sufficient capability for the task load and all dependency constraints of the task component are satisfied.

\textbf{\textsc{Merge}.}
\textsc{Merge} combines multiple task components into a larger result. The call succeeds only if all input components are processed and structurally compatible. Otherwise, the merge fails and a failure message is returned to the source agent.

\section{Significance Tests}
\label{app:significance-tests}

For each setting, we treat experiment runs as independent samples with and without DIG.
We use one-sided Mann--Whitney U tests to evaluate each metric separately, under the hypotheses that DIG leads to lower task error, higher detected errors, and longer runtime.
We further use MANOVA to test the joint effect on all three metrics. Table~\ref{tab:significance-tests} reports our results.

Overall, DIG consistently yields significantly higher detected errors in most settings ($p < 0.05$), indicating improved failure visibility.
Improvements in task error are observed in some settings but not uniformly.
This suggests that while DIG reliably exposes structural failures, local interventions are not always sufficient to fully resolve them, highlighting the need for more advanced healing strategies.
We also observe that MANOVA shows significant effects in more complex settings, indicating that DIG has more potential under higher interaction complexity.

\begin{table}[H]
\centering
\caption{Significance test results comparing runs with and without DIG. Values report one-sided Mann--Whitney U test $p$-values for individual metrics and MANOVA $p$-values for the joint effect across all three metrics. Asterisks indicate significance at $p < 0.05$.}
\label{tab:significance-tests}
\begin{tabular}{llcccc}
\toprule
\textbf{Domain} & \textbf{Setting} & \textbf{Detected Errors} & \textbf{Task Error} & \textbf{Runtime} & \textbf{MANOVA} \\
 & & \textbf{($p$)} & \textbf{($p$)} & \textbf{($p$)} & \textbf{($p$)} \\
\midrule
CF 4-agent & homo 4000   & 0.531 & 0.124 & 0.988 & 0.069 \\
CF 4-agent & homo 8000   & 0.000$^{*}$ & 0.037$^{*}$ & 0.001$^{*}$ & 0.000$^{*}$ \\
CF 4-agent & hetero 1111 & 0.178 & 0.864 & 0.715 & 0.712 \\
CF 4-agent & hetero 2222 & 0.000$^{*}$ & 0.038$^{*}$ & 0.007$^{*}$ & 0.000$^{*}$ \\
CF 6-agent & homo 6000   & 0.500 & 0.271 & 0.121 & 0.488 \\
CF 6-agent & homo 12000  & 0.000$^{*}$ & 0.004$^{*}$ & 0.032$^{*}$ & 0.000$^{*}$ \\
CF 6-agent & hetero 111111 & 0.020$^{*}$ & 0.867 & 0.052 & 0.064 \\
CF 6-agent & hetero 222222 & 0.000$^{*}$ & 0.061 & 0.013$^{*}$ & 0.000$^{*}$ \\
Research & homo 1000 & 0.021$^{*}$ & 0.066 & 0.009$^{*}$ & 0.041$^{*}$ \\
Research & hetero 1111 & 0.000$^{*}$ & 0.096 & 0.013$^{*}$ & 0.000$^{*}$ \\
\bottomrule
\end{tabular}
\end{table}

\newpage
\section{Detailed Comparison with Description Logics and Process Algebras}
\label{app:formalism_comparison}

\textbf{Description Logics (DLs).}
DLs model entities as persistent nodes and relationships as directed edges within a well-specified schema, enabling strong decidability guarantees but at the cost of expressiveness for interaction modeling~\citep{baader2008description}. Since entities are often collapsed into a single persistent node, DLs do not naturally capture interaction histories or evolving interaction structure. While some DL-based frameworks, such as OWL~\citep{antoniou2009web} or temporal DLs~\citep{artale2000survey}, allow modeling evolving properties or actions, the object of analysis is typically state change rather than the evolving interactions themselves.

In DIG, agent activations appear as distinct, time-stamped nodes, and interactions are mediated through event nodes with explicit causal dependencies. Relationships in DIG are dynamically induced by agents through events during execution. By preserving activations separately, DIG tracks how events are transformed, divided, delayed, rerouted, or consumed across agents, ensuring that failure patterns such as repeated rerouting, orphaned events, or cross-lineage aggregation remain distinguishable and can be detected and, ideally, corrected in real time as agents interact.

\textbf{Process Algebras (PAs).}
PAs model concurrent systems and verify their properties~\citep{de2014gentle}, and they have been applied to performance evaluation of stochastic systems~\cite{hermanns2002process}. Many classical approaches rely on interleaving semantics, representing concurrency as nondeterministic sequential patterns, and typically assume predefined communication primitives~\cite{fokkink2013introduction}. While PAs excel at verifying protocols, they often abstract internal agent-to-agent transitions as silent actions, treating structurally different interaction paths as equivalent if they produce the same observable outcomes.

To analyze LLM cooperative problem solving, understanding how a task changes is critical. Instead of verifying predefined protocols, DIG observes and constructs the true interaction structure as it dynamically unfolds. While abstracting away internal node decisions, DIG explicitly preserves inter-node interactions, treating the task as an evolving entity whose lifecycle is captured by the interaction structure. By tracking how interactions propagate and by using edge labels to distinguish productive from non-productive interactions, DIG enables reasoning about task progress and structural inefficiencies, such as effective decomposition versus redundant work or excessive rerouting. Notably, graph rewrite operators allow dynamic modifications to how previously generated events are handled, such as rerouting.

Compared to PAs or DLs, DIG is more expressive for modeling the sequential, time-evolving nature of stochastic and emergent interactions in unconstrained LLM agents at runtime. Because relationships in DIG are dynamically generated rather than defined within a fixed schema or protocol, DIG prioritizes reasoning over evolving interaction structure rather than formal decidability or protocol guarantees. We view DLs and PAs as complementary to our setting, as they may help define and reason about additional structural invariants on top of DIG.

    


\newpage
\section{Example traces}\label{app:exp}

\begin{figure}[h]
    \centering
    \includegraphics[width=0.5\linewidth]{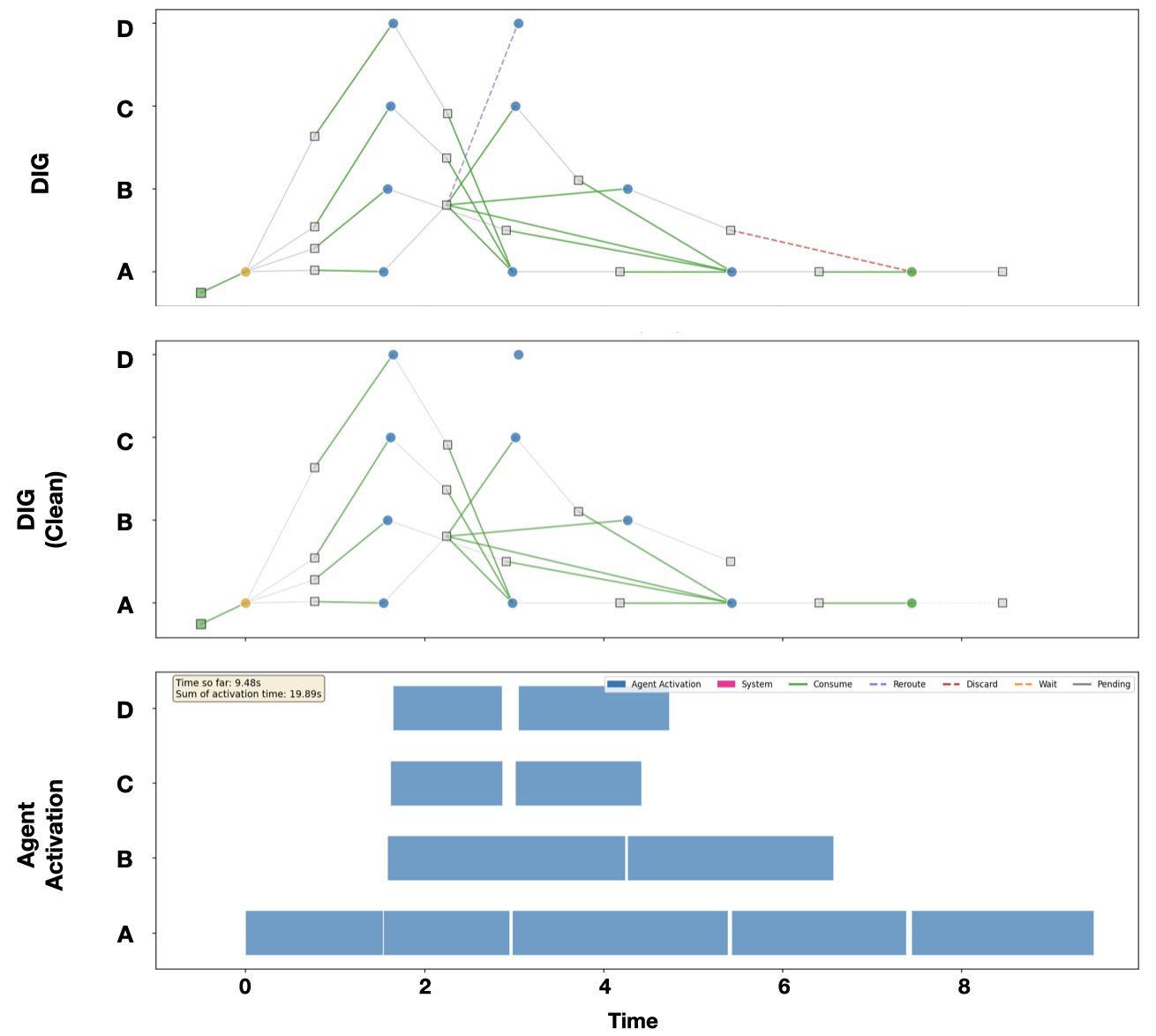}
    \caption{\textbf{Count Frequency example (Homogeneous\_4000, 4 agents).} The problem is within the group’s capability. Agent 1 decomposes the task for parallel execution, but due to asynchrony and partial observability, agents may ignore subtasks, reroute unnecessarily, or duplicate work. These stochastic behaviors make cooperation brittle, which DIG captures and helps diagnose.}
\end{figure}

\begin{figure}[h]
    \centering
    \begin{minipage}[t]{0.49\linewidth}
        \centering
            \includegraphics[width=\linewidth]{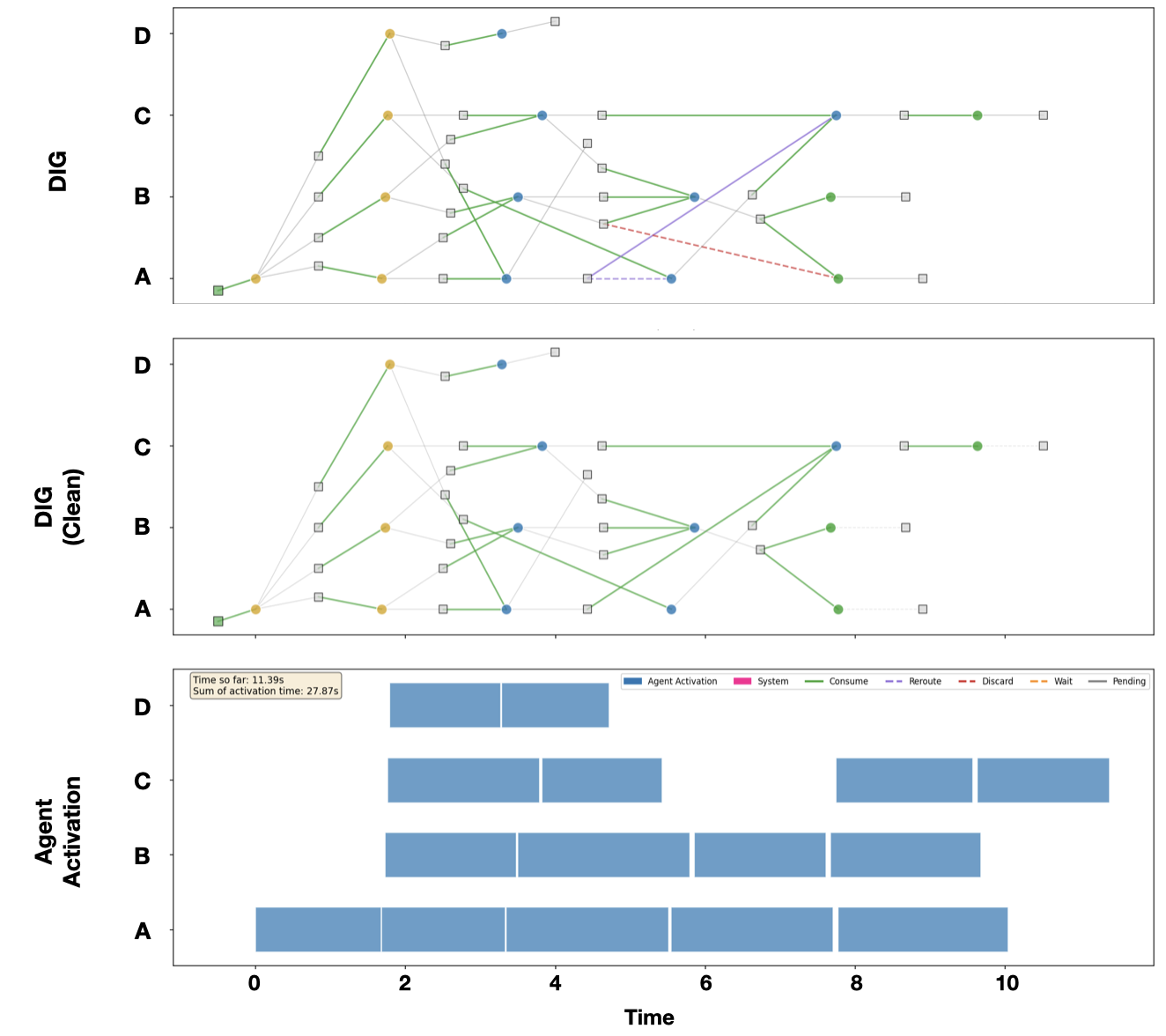}
        \caption{\textbf{Count Frequency example (Homogeneous\_8000, 4 agents, w/o DIG).} The problem exceeds the group’s capability. Agents continue to decompose tasks, resulting in a more complex structure of problem-solving and interaction.}
    \end{minipage}
    \hfill
    \begin{minipage}[t]{0.49\linewidth}
        \centering
        \includegraphics[width=\linewidth]{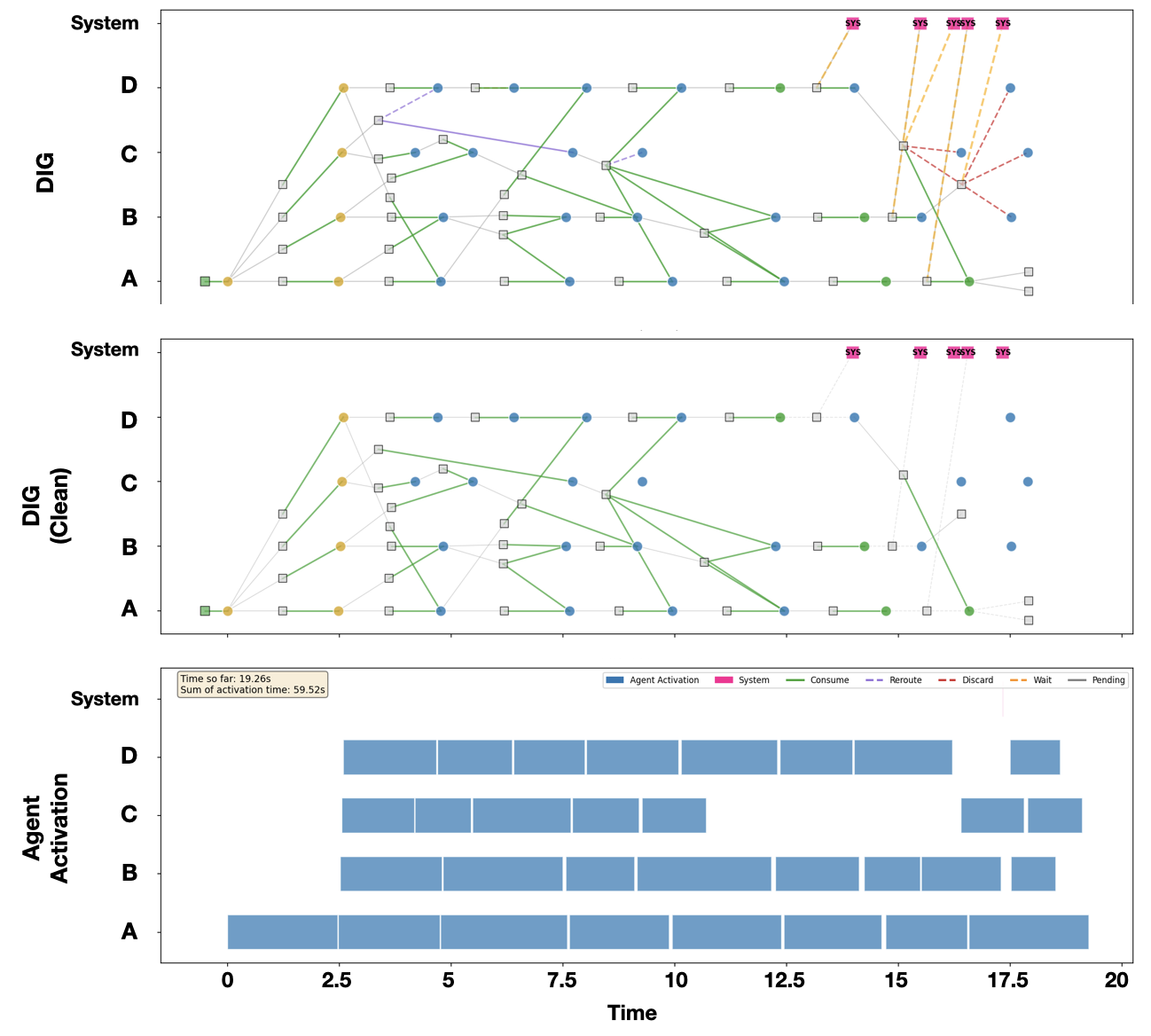}
        \caption{\textbf{Count Frequency example (Homogeneous\_8000, 4 agents, w/ DIG).} DIG mitigates early termination and redundant work, improving the cooperation structure and overall performance.}
    \end{minipage}
\end{figure}

\begin{figure}[h]
    \centering
    \begin{minipage}[t]{0.49\linewidth}
        \centering
        \includegraphics[width=\linewidth]{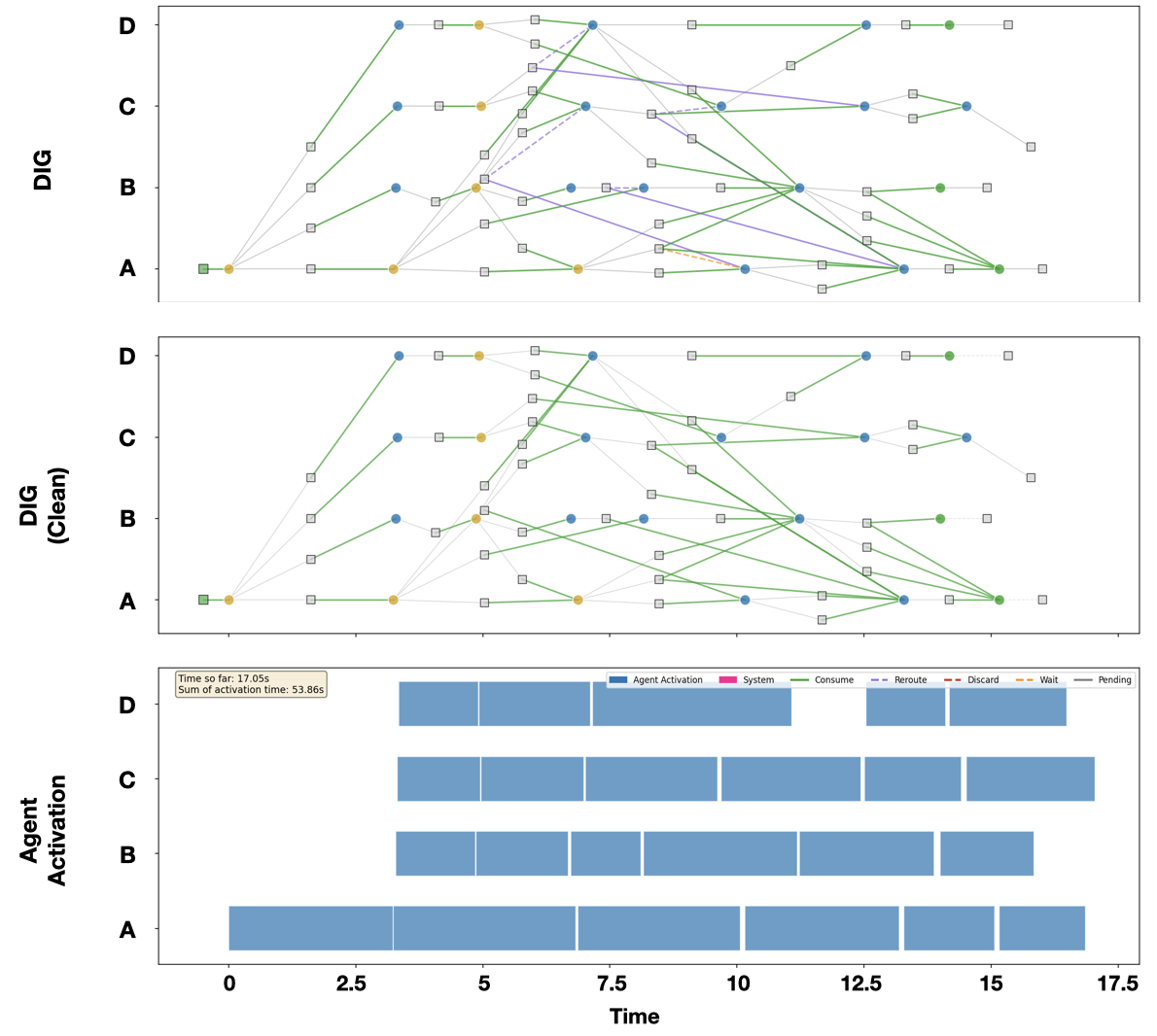}
        \captionof{figure}{\textbf{Count Frequency example (Heterogeneous\_2222, 4 agents, w/o DIG).} The problem exceeds group capability. Agents decompose tasks based on downstream capacity, leading to uneven interaction patterns.}
    \end{minipage}
    \hfill
    \begin{minipage}[t]{0.49\linewidth}
        \centering
        \includegraphics[width=\linewidth]{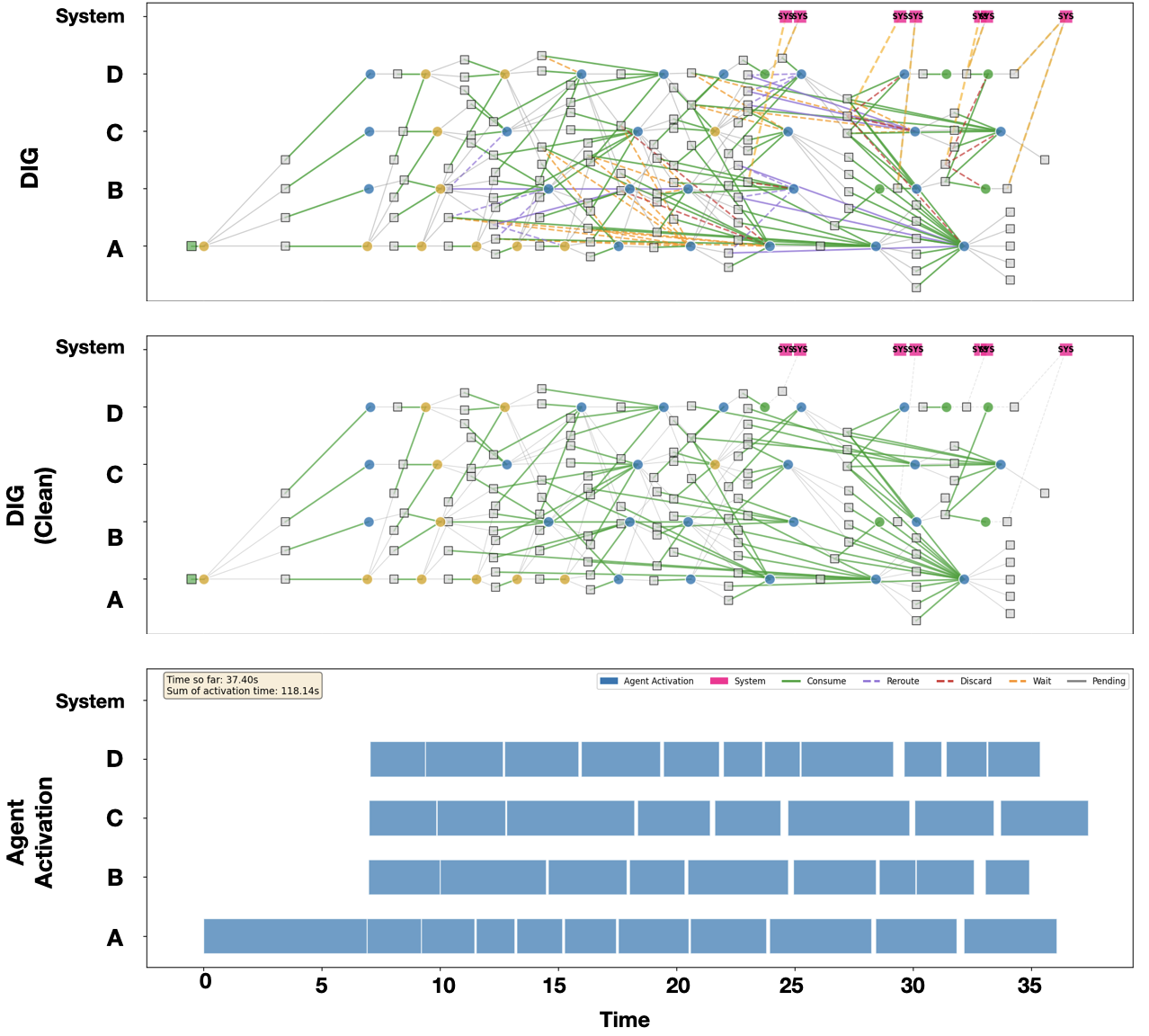}
        \captionof{figure}{\textbf{Count Frequency example (Heterogeneous\_2222, 4 agents, w/ DIG).} DIG monitors interactions and intervenes in real time, improving coordination despite increased complexity.}
    \end{minipage}
\end{figure}

\begin{figure}[t]
    \centering
    \begin{minipage}[t]{0.49\linewidth}
        \centering
        \includegraphics[width=\linewidth]{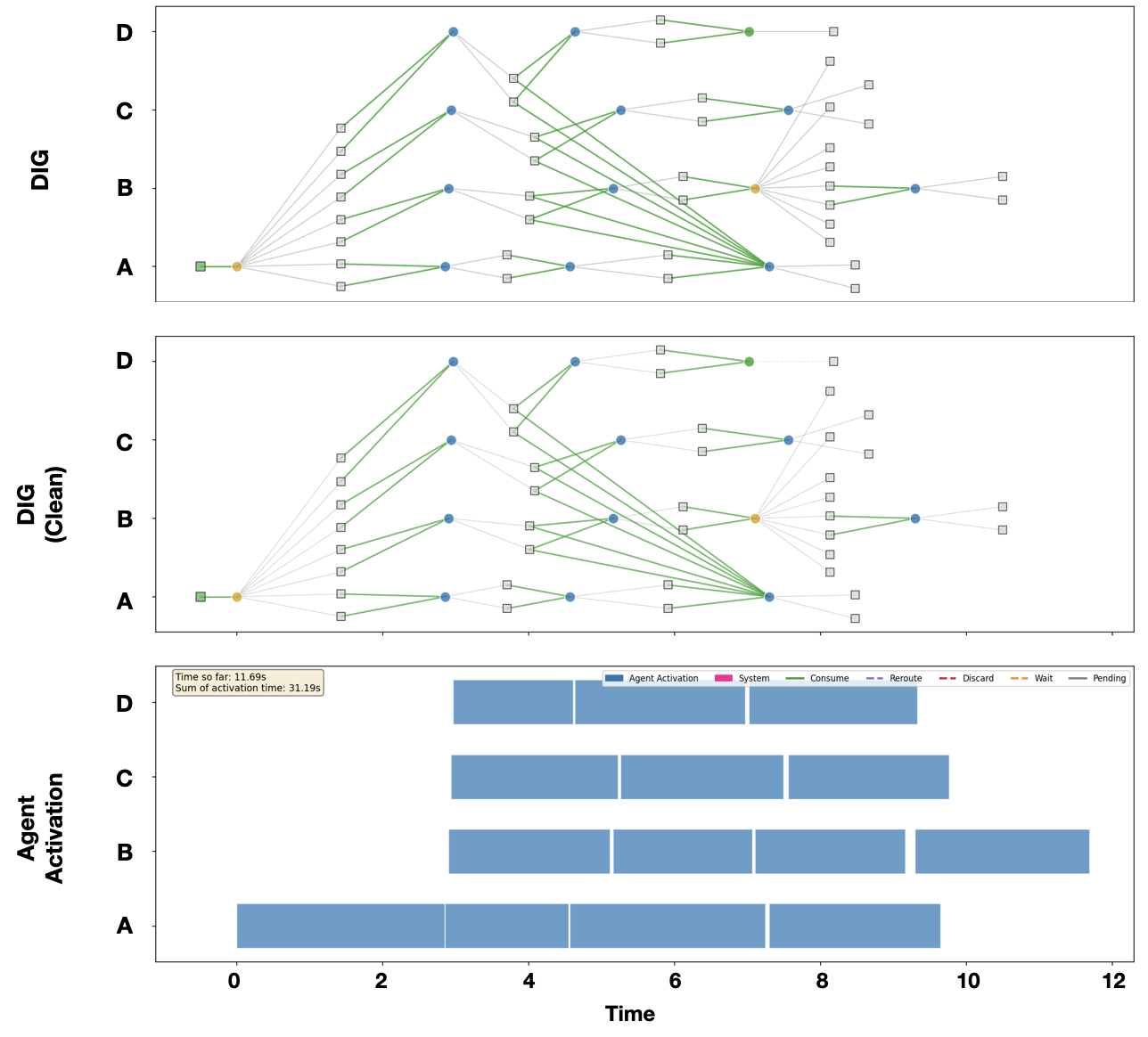}
        \caption{\textbf{Research domain (Heterogeneous\_1111, 4 agents, w/o DIG).} Agents initially operate on the same stage and progressively move forward. However, due to the more complex task structure, errors are more likely: agents may merge incompatible results, violate dependencies, or terminate early under partial observability.}
    \end{minipage}
    \hfill
    \begin{minipage}[t]{0.49\linewidth}
        \centering
        \includegraphics[width=\linewidth]{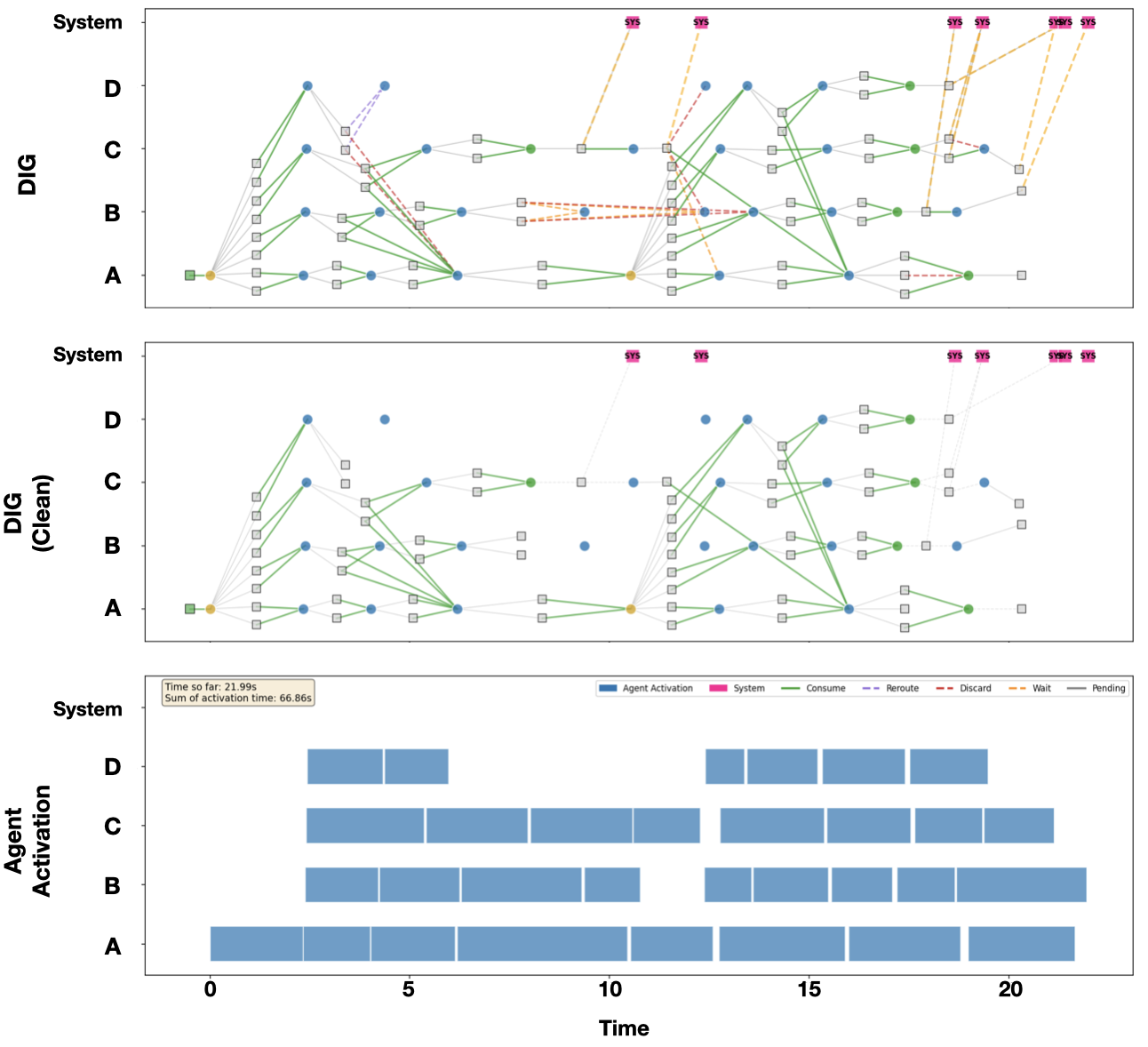}
        \caption{\textbf{Research domain (Heterogeneous\_1111, 4 agents, w/ DIG).} DIG monitors interaction structure and detects violations of graph-level invariants, improving coordination even in complex task settings.}
    \end{minipage}
\end{figure}





\end{document}